\newtheorem{thm}{Theorem}
\newtheorem{cor}[thm]{Corollary}
\newtheorem{lem}[thm]{Lemma}
\newtheorem{prop}[thm]{Proposition}
\newcommand{\RR}{ \mathbb{R} }
\newcommand{\setsep}{\;\middle|\;}
\newcommand{\half}{\frac{1}{2}}
\newcommand{\Prob}[1]{\mathbb{P}\left( #1 \right)}
\newcommand{\Abs}[1]{\left| #1 \right|}
\newcommand{\Set}[1]{\left\{ #1 \right\}}
\newcommand{\Brack}[1]{\left( #1 \right)}
\newcommand{\BBrack}[1]{\left\{ #1 \right\}}
\newcommand{\inner}[2]{\left< #1 , #2 \right>}
\newcommand{\Exp}[1]{ \mathbb{E} #1}
\newcommand{\norms}[1]{\left|#1\right|}
\newcommand{\normop}[1]{\left\|#1\right\|_{op}}
\newcommand{\normopnuc}[1]{\left\|#1\right\|_{nuc}}
\newcommand{\normophs}[1]{\left\|#1\right\|_{HS}}
\newcommand{\tr}{tr}
\DeclarePairedDelimiter\ceil{\lceil}{\rceil}
\title{Finite Sample Analysis Of Dynamic Regression Parameter Learning}
\author{
Mark Kozdoba \\ 
Technion, Israel Institute of Technology \\
\texttt{markk@ef.technion.ac.il} \\
\And 
Edward Moroshko \\
Technion, Israel Institute of Technology \\
\texttt{edward.moroshko@gmail.com}
\AND
Shie Mannor \\
Technion, Israel Institute of Technology \\
\texttt{shie@ee.technion.ac.il}
\And 
Koby Crammer\\
Technion, Israel Institute of Technology \\
\texttt{koby@ee.technion.ac.il}
}
\newif\ifimagesshow
\begin{document}

\maketitle

\begin{abstract}
We consider the dynamic linear regression problem, where the predictor vector may vary with time. This problem can be modeled as a linear dynamical system, with non-constant observation operator, where the parameters that need to be learned are the variance of both the process noise and the observation noise. While variance estimation for dynamic regression is a natural problem, with a variety of applications, existing approaches to this problem either lack guarantees altogether, or only have asymptotic guarantees without explicit rates. In particular, existing literature does not provide any clues to the following  fundamental question: In terms of data characteristics, what does the convergence rate depend on?  In this paper we study the global system operator -- the operator that maps the  noise vectors to the output. We obtain estimates on its spectrum, and as a result derive the first known variance estimators with finite sample complexity guarantees. The proposed bounds depend on the shape of a certain spectrum related to the system operator, and thus provide the first known explicit geometric parameter of the data that can be used to bound estimation errors. In addition, the results hold for arbitrary sub Gaussian distributions of noise terms.  We evaluate the approach on synthetic and real-world benchmarks.
\end{abstract}

\newcommand{\basesystem}{(\ref{eq:lds_main_1})-(\ref{eq:lds_main_2}) }

\section{Introduction}
\label{sec:introduction}
A dynamic linear regression \citep[Chapter 3]{WestHarrison}, or non-stationary regression, is a situation where we are given 
a sequence of scalar \textit{observations} $\Set{Y_t}_{t\leq T}\subset \RR$, 
and \textit{observation vectors} $\Set{u_t}_{t\leq T}\subset \RR^{n}$ such that 
$Y_t = \inner{X_t}{u_t} + z_t$ where $X_t\in \RR^n$ is a regressor vector, and 
$z_t$ a random noise term. In contrast to a standard linear regression, the vector 
$X_t$ may change with time. One common objective for this problem is at time $T$, to estimate the trajectory of $X_t$
for $t\leq T$, given the observation vectors and observations, $\Set{u_t}_{t\leq T}$, 
$\Set{Y_t}_{t\leq T}$, and possibly to forecast $Y_{T+1}$ if $u_{T+1}$ is also known.

In this paper we model the problem as follows: 
\begin{eqnarray}
\label{eq:lds_main_1}
   X_{t+1} &=& X_t + h_t \\
\label{eq:lds_main_2}
   Y_t &=& \inner{X_t}{u_t} + z_t\,,
\end{eqnarray}
where $\inner{\cdot}{\cdot}$ is the standard inner product on $\RR^n$, $z_t$, the \emph{observation noise}, are zero-mean sub Gaussian random variables, 
with variance $\eta^2$, and the \emph{process noise} variables 
 $h_t$ take values in $\RR^n$, 
 such that coordinates of $h_t$ are zero-mean sub Gaussian, independent, and have  variance $\sigma^2$. All $h_t$ and $z_t$ variables are assumed to be mutually independent. 
The vectors $u_t$ are an \textit{arbitrary} sequence 
in $\RR^n$, and the observed, known,  quantities at time $T$ are  
$\Set{Y_t}_{t\leq T}$ and $\Set{u_t}_{t\leq T}$.

The system 
(\ref{eq:lds_main_1})-(\ref{eq:lds_main_2}) 
is a special case of a Linear Dynamical System (LDS). As is well known, when the 
parameters $\sigma,\eta$ are given, the mean-squared loss optimal forecast for $Y_{T+1}$ and estimate for $X_T$ 
are obtained by the Kalman Filter \citep{anderson1979,hamilton1994time,chui2017kalman}. In this paper we are concerned with estimators for $\sigma,\eta$, and \textit{finite} sample 
complexity guarantees for these estimators.

Let us first make a few remarks  about the particular  
system (\ref{eq:lds_main_1})-(\ref{eq:lds_main_2}).  
First, as a natural model of time varying regression, this  system is useful 
in a considerable variety of applications. We refer to \cite{WestHarrison}, 
Chapter 3, for numerous examples. In addition, an application to electricity 
consumption time series as a function of the temperature is provided in the 
experiments section of this paper. Second, one may regard 
the problem of estimating $\sigma,\eta$ in (\ref{eq:lds_main_1})-(\ref{eq:lds_main_2}) as a pure 
case of finding the optimal \textit{learning rate} for $X_t$. 
Indeed, the Kalman filter equations for (\ref{eq:lds_main_1})-(\ref{eq:lds_main_2}), are given by (\ref{eq:kalman_cov_update})-(\ref{eq:kalman_state_update}) below, where  (\ref{eq:kalman_cov_update})  describes the filtered covariance update and (\ref{eq:kalman_state_update}) the filtered state update. Here $\bar{x}_t$ is the estimated state, given the observations $Y_1,\ldots,Y_{t}$, see \cite{WestHarrison}.  
\begin{align}
\label{eq:kalman_cov_update}
C_{t+1} &=  \frac{\eta^2}{\inner{(C_t+\sigma^2 I)u_{t+1}}{u_{t+1}} + \eta^2}
\Brack{C_t + \sigma^2 I} \\
\label{eq:kalman_state_update}
\bar{x}_{t+1} &= \bar{x}_t +  \frac{ C_{t+1}}{\eta^2}
u_{t+1} \cdot \Brack{Y_{t+1} - \inner{\bar{x}_t}{u_{t+1}}} .
\end{align}
In particular, following (\ref{eq:kalman_state_update}), the role of $\sigma$ and $\eta$ may be interpreted as regulating 
how much the estimate of $\bar{x}_{t+1}$ is influenced, via the operator $\frac{ C_{t+1}}{\eta^2}$, by the most recent observation and input $Y_{t+1}, u_{t+1}$. Roughly speaking, higher values 
of $\sigma$ or lower values of $\eta$ would imply that the past observations are given less weight, and  result in an overfit of the forecast to the most recent observation. On the other hand, very low $\sigma$ or high $\eta$ would make the problem similar to the standard linear 
regression, where all observations are given equal weight, and 
result in a \textit{lag} of the forecast. See Figure \ref{fig:params} in Supplementary Material Section \ref{sec:appendix_outline} for an illustration.

Finally, it is worth mentioning that the system (\ref{eq:lds_main_1})-(\ref{eq:lds_main_2}) is 
closely related to the study of \emph{online gradient} (OG) methods \citep{Zinkevich2003,Hazanbook2016}. In this field, assuming quadratic cost, one considers the update 
\begin{equation}
\label{eq:online_grad_update}
\bar{x}_{t+1} = \bar{x}_t +  \alpha \cdot u_{t+1} \cdot \Brack{Y_{t+1} - \inner{\bar{x}_t}{u_{t+1}}},          
\end{equation}
where $\alpha$ is the learning rate, and studies the performance guarantees of the forecaster $\inner{\bar{x}_t}{u_{t+1}}$. 
Compared to  
(\ref{eq:kalman_state_update}), the update (\ref{eq:online_grad_update}) 
is simpler, and uses a scalar rate $\alpha$ instead of the input-dependent 
operator rate $C_{t+1} \big/ \eta^2$ of the Kalman filter. However, due to 
the similarity, every domain of applicability of the OG methods is also a 
natural candidate for the model (\ref{eq:lds_main_1})-(\ref{eq:lds_main_2}) 
and vice-versa. As an illustration, we compare the OG to Kalman filter based 
methods with learned $\sigma$,$\eta$ in the experiments section.

In this paper we introduce a new estimation algorithm for $\sigma,\eta$, 
termed STVE (Spectrum Thresholding Variance Estimator), and prove finite 
sample complexity bounds for it. In particular, 
our bounds are an explicit function of the parameters $T$ and 
$\Set{u_t}_{t=1}^T$ for any finite $T$, and indicate that the estimation 
error decays roughly as $T^{-\half}$, with high probability. To the best of our knowledge, these are the first bounds of this kind.  As we discuss in detail in Section \ref{sec:literature}, most existing estimation methods for LDSs, such as subspace identification \citep{van1996subspace,qin2006overview}, or improper learning \citep{anava13,hazan2017online,kalman_filter_decay}, do not apply to the system (\ref{eq:lds_main_1})-(\ref{eq:lds_main_2}), due to non-stationarity. 
On the other hand, the methods that do apply to (\ref{eq:lds_main_1})-(\ref{eq:lds_main_2}) either lack guarantees, or have only asymptotic analysis which in addition relies strongly on Guassianity of the noises. 

Moreover, our approach differs significantly from the existing methods. We show that the structure of equations \basesystem  is closely related to, and inherits several important properties from, the classical discrete Laplacian operator on the line --- leading to new arguments that have not been explored in the literature. In particular, we use this connection to show that an appropriate inversion of the system produce estimators that are concentrated enough so that $\sigma$ and $\eta$ may be recovered. The heart of the paper is the new definition of the estimators that exploits explicitly the shape of a certain data dependent operator, and the subsequent concentration analysis. In particular, this approach yields the first known {\em geometric } parameters of the data that can be used to bound convergence rates.

The rest of the paper is organized as follows:   The  related work is discussed in Section \ref{sec:literature} and  Section \ref{sec:notation} contains the necessary definitions. In 
Section \ref{sec:overview}  we describe in general lines the methods and the main results of this paper.  The technical estimates on certain operator spectra, that are critical to the analysis and may be of independent interest, are stated in Section \ref{sec:analysis}. In 
Section \ref{sec:experiments}
we present experimental results on synthetic and real data. Due to space constraints, while we outline the main arguments in the text, the full proofs are deferred to the Supplementary Material.

\section{Literature}
\label{sec:literature}
We refer to \cite{chui2017kalman,hamilton1994time,anderson1979,shumwaystoffer} for a general background on LDSs, the Kalman Filter and 
maximum likelihood estimation.

Existing approaches to the 
variance estimation problem may be divided into three categories: 
(i) General methods for parameter identification in LDS, 
either via maximum likelihood estimation (MLE)  
\citep{hamilton1994time}, or via subspace 
identification \citep{van1996subspace,qin2006overview}. In particular, finite sample bounds for system identification were given in \citep{campi2005guaranteed,vidyasagar2006learning} and in the recent work  \cite{tsiamis2019finite}. (ii) Methods 
designed specifically to learn the noise parameters of 
the system, developed primarily in the control theory community, in particular via the innovation 
auto-correlation function, such as the classical 
\cite{MEHRA70,BELANGER1974267}, or for 
instance more recent \cite{WANG201712,DUNIK201816}. (iii) \emph{Improper Learning} methods, such as \cite{anava13,hazan2017online,kalman_filter_decay}. In these approaches, one does not learn the LDS directly, but instead learns a model from a certain auxiliary class and shows that this auxillary model produces forecasts that are as good as the forecasts of an LDS with ``optimal'' parameters. 
  
Despite the apparent simplicity of the system 
(\ref{eq:lds_main_1})-(\ref{eq:lds_main_2}), most of 
the above methods do not apply to this system. This is 
due to the fact that most of the methods are designed 
for time invariant, asymptotically stationary systems, 
where the observation operator ($u_t$ in our notation) is 
constant and the Kalman gain (or, equivalently $C_t u_t$ in eq. (\ref{eq:kalman_cov_update})) converges with $t$. In particular this limitation exists in all the system identification results cited above, and is essential to the approaches taken there.
However, if the observation vector sequence $u_t$ 
changes with time -- a necessary property for the 
dynamic regression problem -- the system will no longer
be asymptotically stationary.
In particular, due to this reason, neither the subspace identification methods, nor any of the improper learning approaches above apply to system \basesystem. 

Among the methods that do apply to (\ref{eq:lds_main_1})-(\ref{eq:lds_main_2}) are the general MLE estimation, and some of the auto-correlation 
methods \citep{BELANGER1974267,DUNIK201816}.
On one hand, both types of approaches may be 
applicable to systems apriori more general than 
(\ref{eq:lds_main_1})-(\ref{eq:lds_main_2}). On the 
other hand, the situation with consistency guarantees -- the guarantee that one recovers true parameters given enough observations -- 
for these methods is somewhat complicated. Due to the 
non-convexity of the likelihood function, the MLE 
method is not guaranteed to find the true maximum, and 
as a result the whole method has no guarantees. The 
results in \cite{BELANGER1974267,DUNIK201816}  
do have \emph{asymptotic} consistency guarantees. 
However, these rely on some explicit and implicit 
assumptions about the system, the sequence $u_t$ in 
our case, which can not be easily verified. 
In particular, \cite{BELANGER1974267,DUNIK201816} assume \emph{uniform observability} of the system, 
which we do not assume, and in addition rely on certain implicit assumption about invertibility and condition number of 
the matrices related to the sequence $u_t$. Moreover, 
even if one assumes that the assumptions hold, the 
results are purely asymptotic, and for any finite $T$,
do not provide a bound of the expected estimation 
error as a function of $T$ and $\Set{u_t}_{t=1}^T$.

In addition, as mentioned earlier, MLE methods by definition must assume that the noises are Gaussian (or belong to some other  predetermined parametric family) and autocorrelation based methods also strongly use the Gaussianity assumption. Our approach, on the other hand, requires only sub Gaussian noises with independent coordinates. We note that there are straightforward extensions of our methods to certain cases with dependencies. Indeed, the operator analysis part of this paper does not depend on the distribution of the noises. Therefore, to achieve such an extension, one would only need to correspondingly extend the main probabilistic tool, the Hanson-Wright inequality \citep[see also Section \ref{sec:overview} and Supplementary Material Section \ref{sec:main_thm_proof}]{hansonwright,rudelson2013hanson}. One such extension, for vectors with the \emph{convex concentration} property, was recently obtained in \cite{adamczak}. 

\section{Notation}
\label{sec:notation}
  We refer to \cite{bhatia97matrix} and \cite{hdp2018} as general references on the notation introduced below, for operators and sub Gaussian variables, respectively.

Let $A: \RR^n \rightarrow \RR^m$ be an operator with a singular value decomposition
$A = U  \cdot Diag(\lambda_1,\ldots,\lambda_s) \cdot W$,
where $s\leq \min \Set{m,n}$ and $\lambda_1 \geq \lambda_2 \geq \ldots \geq \lambda_s > 0$. Note that 
singular values are strictly positive by definition (that is, vectors corresponding to the kernel of $A$ do not participate in the decomposition $A = U \cdot Diag(\lambda_1,\ldots,\lambda_s) \cdot W$).
The Hilbert-Schmidt (Frobenius) norm is defined as  $\normophs{A} = \sqrt{\sum_{i=1}^s \lambda_i^2}$. The nuclear and the operator norms are given by 
$\normopnuc{A} = \sum_{i=1}^s \lambda_i$ and  
$\normop{A} = \lambda_1$ respectively.

A centered ($\Exp{X}=0$) scalar random variable $X$ is sub-Gaussian with constant $\kappa$, denoted $X \sim SG(\kappa)$, if for all $t>0$ 
it satisfies
$
    \Prob{\Abs{X} > t } \leq 2 \exp\Brack{-\frac{t^2}{\kappa^2}}
$. A random vector $X = (X_1, \ldots, X_m)$ is $\kappa$ sub-Gaussian, denoted $X\sim SG_m(\kappa)$, if for every $v \in \RR^m$ with $\norms{v}=1$ the random variable $\inner{v}{X}$ is $\kappa$ sub-Gaussian. A random vector $X$ is $\sigma$-isotropic if for every $v\in \RR^m$ with $\norms{v}=1$, $\Exp \inner{v}{X}= \sigma^2$. 

Finally, a random vector $X = (X_1, \ldots, X_m)$ is $\sigma$-isotropically $\kappa$ sub-Gaussian with independent components, denoted $X \sim ISG_m(\sigma,\kappa)$ if $X_i$ are independent, and for all $i \leq m$, $\Exp X_i = 0$, $\Exp X_i^2 = \sigma^2$ and $X_i \sim SG(\kappa)$. Clearly, if $X \sim ISG_m(\sigma,\kappa)$ then $X$ is $\sigma$-isotropic. Recall also that 
$X \sim ISG_m(\sigma,\kappa)$ implies $X \sim SG_m(\kappa)$  \citep{hdp2018}.  The noise variables we discuss in this paper are  $ISG(\kappa, \sigma)$. 

Throughout the paper, absolute constants are denoted by $c,c',c'', \ldots$. etc. Their values may change from line to line. 

\section{Overview of the approach}
\label{sec:overview}

We begin by rewriting (\ref{eq:lds_main_1})-(\ref{eq:lds_main_2}) in a vector form. 
To this end, we first encode sequences of $T$ vectors in $\RR^n$, $\Set{a_t}_{t\leq T} \subset \RR^n$,
as a vector $a \in \RR^{Tn}$, constructed by concatenation of $a_t$'s.
Next, we define the summation operator 
$S':\RR^T \rightarrow \RR^T$ which acts on any vector $(h_1,h_2,\ldots,h_T) \in \RR^T$ by 
\begin{equation}
\label{eq:Sprime_def}
S'(h_1,h_2,\ldots,h_T) = (h_1,h_1+h_2,\ldots, \sum_{i\leq T-1} h_i, \sum_{i\leq T} h_i). 
\end{equation}
Note that $S'$ is an invertible operator.
Next, 
we similarly define the summation operator $S:\RR^{Tn} \rightarrow \RR^{Tn}$, an $n$-dimensional 
extension of $S'$, which sums $n$-dimensional vectors. Formally, for 
$(h_l)_{l=1}^{Tn} \in \RR^{Tn}$, and for $1\leq j \leq n, 1 \leq t \leq T$, $(Sh)_{(t-1)\cdot n+j} = \sum_{i\leq t} h_{(i-1) \cdot n+j}$.
Observe that if the sequence of process noise terms $h_1,\ldots,h_T \in \RR^n$ is viewed as a vector 
$h \in \RR^{Tn}$, then by definition $Sh$ is the $\RR^{Tn}$ encoding of the sequence
$X_t$.

Next, given a sequence of observation vectors $u_1,\ldots,u_T\in \RR^n$, we define the observation 
operator $O_u: \RR^{Tn} \rightarrow \RR^{T}$ by $(O_u x)_t = \inner{u_t}{\Brack{x_{(t-1)\cdot n+1},\ldots,x_{(t-1)\cdot n+n}}}$. 
In words, coordinate $t$ of $O_u x$ is the inner product between $u_t$ and $t$-th part of the 
vector $x \in \RR^{Tn}$. 
Define also $Y = (Y_1,...,Y_T) \in \RR^T$ to be the concatenation of $Y_1,...,Y_T$. 
With this notation, one may equivalently rewrite the system 
(\ref{eq:lds_main_1})-(\ref{eq:lds_main_2}) as follows:
\begin{equation}
\label{eq:main_generative_equation}
Y = O_u S h + z,
\end{equation}
where $h$ and $z$ are independent zero-mean random vectors in $\RR^{Tn}$ and $\RR^T$ respectively, with independent sub Gaussian coordinates. The variance of each coordinate of $h$ is $\sigma^2$ and each coordinate of $z$ has variance $\eta^2$. 

Up to now, we have reformulated our data model as a single vector equation. Note that 
in that equation, the observations $Y$ and both operators $O_u$ and $S$ are known to us. 
Our problem may now be reformulated as follows: Given $Y\in \RR^T$, assuming 
$Y$ was generated by (\ref{eq:main_generative_equation}), provide estimates of 
$\sigma,\eta$.  

As a motivation, we first 
consider taking the expectation of the norm squared of eq. (\ref{eq:main_generative_equation}).
For any operator $A:\RR^m \rightarrow \RR^m$ and zero-mean vector $h$ with independent coordinates and coordinate variance $\sigma^2$, we have 
$\Exp{\norms{Ah}^2} = \normophs{A}^2 \sigma^2$, where $\normophs{A}$ is the Hilbert-Schmidt (or Frobenius) norm of $A$. Taking the norm and expectation of (\ref{eq:main_generative_equation}), and dividing by $T^2$, we thus obtain
\begin{equation}
\label{eq:motivation}
    \frac{\Exp{\norms{Y}^2}}{T^2} = 
    \frac{\normophs{O_uS}^2}{T^2} \sigma^2 + \frac{T}{T^2} \eta^2. 
\end{equation}
Next, note that $\normophs{O_uS}^2$ is known, and an elementary computation shows that $\frac{\normophs{O_uS}^2}{T^2}$ is of constant order (as a function of $T$; see (\ref{eq:gamma_hs_bound})), while the coefficient of $\eta^2$  is $\frac{1}{T}$. Thus, if the quantity $\frac{\norms{Y}^2}{T^2}$ were close enough to its expectation with high probability, 
we could take this quantity as a (slightly biased) estimator of $\sigma^2$. However, as it will become apparent later, the deviations of
$\frac{\norms{Y}^2}{T^2}$ around the expectation are also of constant order, and thus $\frac{\norms{Y}^2}{T^2}$ can not be used as an estimator.  The reason for these high deviations of $\frac{\norms{Y}^2}{T^2}$ is that the spectrum of $O_uS$ is extremely peaked. The highest squared singular value of $O_uS$ is of order $T^2$, the same order as sum of all of them, $\normophs{O_uS}^2$. Contrast this with the case of identity operator, $Id: \RR^{Tn} \rightarrow \RR^{Tn}$: We have 
$\Exp \norms{Id(h)}^2 = \Exp \norms{h}^2 = Tn \sigma^2$, and one can also easily show that, for instance, $Var \norms{Id(h)}^2 = Tn \sigma^2 $, and thus the deviations are of order $\sqrt{Tn}\sigma$ -- a smaller order than $\Exp \norms{Id(h)}^2$. While for the identity operator the computation is elementary, for a general operator $A$  the situation is significantly more involved, and the bounds on the deviations of $\norms{Y}^2$ will be obtained from the Hanson-Wright inequality \citep[see also \cite{rudelson2013hanson}]{hansonwright}, combined with standard norm deviation bounds for isotropic sub Gaussian vectors. 

With these observations in mind, we proceed to flatten the spectrum of $O_uS$ by taking the pseudo-inverse. Let $R:\RR^T \rightarrow \RR^{Tn}$ be the pseudo-inverse, or Moore-Penrose inverse of $O_uS$. Specifically, let 
\begin{equation}
\label{eq:O_uS_SVD_intro}
     O_uS = U \circ Diag(\gamma_1,\ldots,\gamma_T)\circ W, 
\end{equation}
be the singular value decomposition of $O_uS$, where 
$\gamma_1 \geq \gamma_2 \geq \ldots \geq \gamma_T$ are the singular values.

For the rest of the paper, we will  assume that all of the observation vectors $u_t$ are non-zero. This assumption is made solely for notational simplicity and may easily be avoided, as discussed later in 
this section. Under this assumption, since $S$ is invertible and $O_u$ has rank $T$, we have $\lambda_t>0$ for all $t\leq T$. For $i \leq T$, denote $\chi_i = \gamma_{T+1-i}^{-1}$. Then 
$\chi_i$ are the singular values of $R$, arranged in a non-increasing order, and we have by definition
\begin{equation}
\label{eq:R_SVD_intro}
     R = W^* \circ Diag(\chi_T,\chi_{T-1},\ldots,\chi_2,\chi_1)\circ U^*,
\end{equation}
where $W^{*},U^{*}$ denote the transposed matrices of $U,V$, defined in (\ref{eq:O_uS_SVD_intro}).

Similarly to Eq. (\ref{eq:motivation}), we apply $R$ to (\ref{eq:main_generative_equation}), and since $\normophs{RO_uS} = T$, by taking the expectation of the squared norm we obtain
\begin{equation}
\label{eq:first_eq_intro_exp}
\frac{\norms{RY}^2}{T} = \sigma^2 + \frac{\normophs{R}^2}{T} \eta^2   +\Brack{\frac{\norms{RY}^2}{T} - \frac{\Exp{\norms{RY}^2}}{T}}. 
\end{equation}
In this equation, the deviation term 
$\Brack{\frac{\norms{RY}^2}{T} - \frac{\Exp{\norms{RY}^2}}{T}}$ is of order  $O(\frac{1}{\sqrt{T}})$ with high probability (Theorem \ref{thm:main_thm}). Moreover, the coefficient of $\sigma^2$ is $1$, and the coefficient of $\eta^2$, which is $\frac{\normophs{R}^2}{T}$, is of order at least $\Omega(\frac{1}{\log^2 T})$ (Theorem \ref{lem:R_properties}, see Section \ref{sec:analysis} for additional details) -- much larger order than $\frac{1}{\sqrt{T}}$. Since $\norms{RY}^2$ and 
$\normophs{R}^2$ are known,
it follows that we have obtained one equation satisfied by $\sigma^2$ and $\eta^2$ up to an error of $\frac{1}{\sqrt{T}}$, where both coefficients are of order larger than the error.

Next, we would like to obtain another linear relation between $\sigma^2$, $\eta^2$. 
To this end, choose some $p = \alpha T$, where $0<\alpha<1$ is of constant order. The possible choices of $p$ are discussed later in this section. We define an operator $R':\RR^{T} \rightarrow \RR^{Tn}$ to be a version of $R$ truncated to the first $p$ singular values. If (\ref{eq:R_SVD_intro}) is the 
SVD decomposition of $R$, then 
\begin{equation*}
     R' = W^* \circ Diag(0,0,\ldots,\chi_p,\chi_{p-1},\ldots,\chi_1)\circ U^*. 
\end{equation*}
Similarly to the case for $R$, we have 
\begin{equation}
\label{eq:second_eq_intro_exp}
\frac{\norms{R'Y}^2}{p} = \sigma^2 + \frac{\normophs{R'}^2}{p} \eta^2   +\Brack{\frac{\norms{R'Y}^2}{p} - \frac{\Exp{\norms{R'Y}^2}}{p}
}.
\end{equation}
The deviations in (\ref{eq:second_eq_intro_exp}) are also described by Theorem \ref{thm:main_thm}.   Note also that since $\normophs{R'}^2$ is the sum of $p$ largest squared singular values of $R$, by definition it follows that $\frac{\normophs{R'}^2}{p} \geq \frac{\normophs{R}^2}{T}$.

Now, given two equations in two unknowns, we can solve the system to obtain the estimates 
$\widehat{\sigma^2}$ and $\widehat{\eta^2}$. The full procedure is summarized in Algorithm \ref{alg:ve}, and the bounds implied by Theorem \ref{thm:main_thm} on the estimators
$\widehat{\sigma^2}$ and $\widehat{\eta^2}$ 
are given in Corollary \ref{cor:estimator_bounds}. We first state these results, and then discuss in detail the various parameters appearing in the bounds. 

\begin{algorithm}[tb]
   \caption{Spectrum Thresholding Variance Estimator (STVE)}
   \label{alg:ve}
\begin{algorithmic}[1]
   \STATE {\bfseries Input:} Observations $Y_t$, observation vectors $u_t$, with $t\leq T$, and $p=\alpha T$. \\ 
   \STATE Compute the SVD of $O_uS$, 
    \begin{equation*} 
       O_uS = U \circ Diag(\gamma_1,\ldots,\gamma_T)\circ W, 
    \end{equation*}   
   where $\gamma_1 \geq \gamma_2 \geq \ldots \geq \gamma_T > 0$. Denote $\chi_i = \gamma_{T+1-i}^{-1}$ for $1\leq i \leq T$.\\
	\STATE Construct the operators 
    \begin{equation*} 
       R = W^{*} \circ Diag(\chi_T,\ldots,\chi_1)\circ U^{*} 
    \end{equation*}      
    and 
    \begin{equation*} 
       R' = W^{*} \circ Diag(0,0,\ldots,0,\chi_p,\ldots,\chi_1)\circ U^{*}   
    \end{equation*} 
    
   \STATE Produce the estimates: 
      \begin{eqnarray*}
          \widehat{\eta^2} &=& \Brack{ 
          							 \frac{\norms{R'Y}^2}{p} - 
          							 \frac{\norms{RY}^2}{T} 
          						} \Big/ \Brack{
          						     \frac{\normophs{R'}^2}{p} - 
          						     \frac{\normophs{R}^2}{T} 
          						}   \\
          \widehat{\sigma^2} &=& \frac{\norms{RY}^2}{T} - \frac{\normophs{R}^2}{T}\widehat{\eta^2}.      
      \end{eqnarray*}
\end{algorithmic}
\end{algorithm}

\begin{thm}  
\label{thm:main_thm}
Consider a random vector $Y \in \RR^T$ of the form $Y = O_u Sh + z$
where $h \sim ISG_{Tn}(\sigma,\kappa)$ and $z \sim ISG_{T}(\eta,\kappa)$.  
Set $\norms{u_{min}} = \min_t \norms{u_t}$.
Then  for any $0<\delta<1$,
\begin{flalign}
\label{eq:main_thm_dev_for_R_statement}
\Prob{\Abs{\frac{\norms{RY}^2}{T} - \Brack{\sigma^2 + \frac{\normophs{R}^2}{T}\eta^2}} \geq 
c \frac{B}{\sqrt{T}}
} \leq 4 \delta,
\\
\label{eq:main_thm_dev_for_R_tag_statement}
\Prob{\Abs{\frac{\norms{R'Y}^2}{p} - \Brack{\sigma^2 + \frac{\normophs{R'}^2}{p}\eta^2}} \geq 
c \frac{B}{\sqrt{p}}
} \leq 4 \delta
\end{flalign}
where $B$ is given by 
\begin{equation}
\label{eq:B_delta_def}
    B = \Brack{1+\kappa^2}\Brack{1+\norms{u_{min}}^{-2}}\log \frac{1}{\delta}.
\end{equation}
\end{thm}

The bounds on the estimators of Algorithm \ref{alg:ve} are given in the following Corollary. As discussed below, in addition to Theorem \ref{thm:main_thm}, the key to the derivation of this Corollary are the estimates of the spectrum of $R$, given in  Theorem \ref{lem:R_properties},  Section \ref{sec:analysis}.
\begin{cor} 
\label{cor:estimator_bounds}
Let $\widehat{\sigma^2},\widehat{\eta^2}$ be the estimators of $\sigma^2,\eta^2$ obtained form Algorithm \ref{alg:ve} with $p\geq \frac{1}{4}T$. Set $\norms{u_{max}} = \max_t \norms{u_t}$.
Then for any $0<\delta<1$,
with probability at least $1- 8 \delta$, 
\begin{align}
\label{eq:corr_est_bound_sigma}
\Abs{\widehat{\sigma^2} - \sigma^2}  &\leq 
c\frac{B}{\sqrt{T}}
 \Brack{1- \frac{p\normophs{R}^2}{T\normophs{R'}^2}}^{-1}, 
\\
\label{eq:corr_est_bound_eta}
\Abs{\widehat{\eta^2} - \eta^2} &\leq 
c\frac{B}{\sqrt{T}}
 \Brack{1- \frac{p\normophs{R}^2}{T\normophs{R'}^2}}^{-1} n^2 \norms{u_{max}}^2 \log^2 T,
\end{align}
with $B$ given by \eqref{eq:B_delta_def}.
\end{cor}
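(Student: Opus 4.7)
The plan is to treat Theorem \ref{thm:main_thm} as producing two noisy linear equations in the two unknowns $\sigma^2, \eta^2$, and then to check that Algorithm \ref{alg:ve} merely inverts that $2\times 2$ system. First I would apply Theorem \ref{thm:main_thm} to both $R$ and $R'$ and take a union bound, obtaining with probability at least $1-8\delta$ the two identities
\begin{align*}
\frac{\norms{RY}^2}{T} &= \sigma^2 + \frac{\normophs{R}^2}{T}\eta^2 + E_1, \\
\frac{\norms{R'Y}^2}{p} &= \sigma^2 + \frac{\normophs{R'}^2}{p}\eta^2 + E_2,
\end{align*}
where $|E_1|, |E_2| \leq 2K/\sqrt{T}$ with $K = c(1+\kappa^2)(1+\norms{u_{min}}^{-2})\log(1/\delta)$; here I used $\sqrt{p}\geq \sqrt{T}/2$, which follows from $p\geq T/4$.

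Next I would eliminate $\sigma^2$ by subtracting the two equations, yielding
\begin{equation*}
\widehat{\eta^2} - \eta^2 \;=\; \frac{E_2-E_1}{\normophs{R'}^2/p - \normophs{R}^2/T}.
\end{equation*}
Factoring $\normophs{R'}^2/p$ out of the denominator gives
\begin{equation*}
|\widehat{\eta^2} - \eta^2| \;\leq\; \frac{p\,(|E_1|+|E_2|)}{\normophs{R'}^2}\,\Brack{1-\frac{p\normophs{R}^2}{T\normophs{R'}^2}}^{-1},
\end{equation*}
which already exhibits the $(1-\cdot)^{-1}$ factor in (\ref{eq:corr_est_bound_eta}). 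The remaining factor $p/\normophs{R'}^2$ should be absorbed into $n^2\norms{u_{max}}^2\log^2 T$, which is exactly what Theorem \ref{lem:R_properties} provides; once invoked, (\ref{eq:corr_est_bound_eta}) follows.

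For $\widehat{\sigma^2}$, combining $\widehat{\sigma^2} = \norms{RY}^2/T - (\normophs{R}^2/T)\widehat{\eta^2}$ with the first of the two equations gives $\widehat{\sigma^2} - \sigma^2 = -(\normophs{R}^2/T)(\widehat{\eta^2}-\eta^2)+E_1$. Substituting the expression just derived for $\widehat{\eta^2} - \eta^2$, writing $c_R = \normophs{R}^2/T$ and $c_{R'} = \normophs{R'}^2/p$, and using the inequality $c_R|E_2|+c_{R'}|E_1| \leq c_{R'}(|E_1|+|E_2|)$ (valid since $c_R \leq c_{R'}$), I obtain the clean bound $|\widehat{\sigma^2}-\sigma^2|\leq (|E_1|+|E_2|)/(1 - p\normophs{R}^2/(T\normophs{R'}^2))$, which is exactly (\ref{eq:corr_est_bound_sigma}).

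The main obstacle is not the algebra itself, which is routine $2\times 2$ elimination plus bookkeeping to extract the clean $(1-p\normophs{R}^2/(T\normophs{R'}^2))^{-1}$ form, but rather invoking Theorem \ref{lem:R_properties} to control $p/\normophs{R'}^2$ by the quantity $n^2\norms{u_{max}}^2\log^2 T$ appearing in (\ref{eq:corr_est_bound_eta}). A minor care point is checking that $c_{R'} > c_R$ so that the $2\times 2$ system is invertible; this holds because $\normophs{R'}^2$ sums the $p$ largest squared singular values of $R$, so that $\normophs{R'}^2/p \geq \normophs{R}^2/T$ by a simple averaging argument, and the corresponding factor $1-p\normophs{R}^2/(T\normophs{R'}^2)$ is strictly positive.
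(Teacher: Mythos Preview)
Your proposal is correct and follows essentially the same route as the paper's own proof: write the two deviation bounds from Theorem \ref{thm:main_thm} as a noisy $2\times 2$ linear system in $(\sigma^2,\eta^2)$, take a union bound (giving the $1-8\delta$), invert the system, and control the coefficients. The only small point worth flagging is that Theorem \ref{lem:R_properties} as stated bounds $T/\normophs{R}^2$, not $p/\normophs{R'}^2$; the paper closes this by appealing to the intermediate inequality (\ref{eq:set_of_large_inv_gammas}) from inside the proof of Theorem \ref{lem:R_properties}, but your own averaging observation $\normophs{R'}^2/p \geq \normophs{R}^2/T$ already gives $p/\normophs{R'}^2 \leq T/\normophs{R}^2$, after which Theorem \ref{lem:R_properties} applies exactly as you claim.
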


We first discuss the assumption $\norms{u_{min}} > 0$. This assumption is made solely for notational convenience, as detailed below. To begin, note that some 
form of lower bound on the norms of the observation vectors $u_t$ \textit{must} appear in the bounds.
 This is simply because if one had $u_t = 0$ for all $T$, then clearly no estimate of $\sigma$ 
would have been possible. On the other hand, our use of the smallest value $\norms{u_{\min}}$ may 
seem restrictive at first. We note however, 
that instead of considering the observation 
operator $O_u : \RR^{Tn} \rightarrow \RR^T$, 
one may consider the operator 
$O_{\bar{u}} : \RR^{Tn} \rightarrow \RR^{\bar{T}}$ for any subsequence 
$\Set{\bar{u}_{\bar{t}}}_{\bar{t}=1}^{\bar{T}}$. The observation vector $Y$ would be correspondingly restricted to the subsequence of indices. This allows us to treat missing values and to exclude any outlier $u_t$ with small norms. All the arguments in Theorems \ref{thm:main_thm} and \ref{lem:R_properties} hold for this modified $O_{\bar{u}}$ without change. The only price that will be paid is that $T$ will be replaced by $\bar{T}$ in the bounds. Moreover, we note that typically we have $\norms{u_t}\geq 1$  by construction, see for instance Section \ref{sec:electricity_experiments}. Additional discussion of missing values may be found in Supplementary Material Section \ref{sec:missing_values}.

Next, up to this point, we have obtained two equations, (\ref{eq:first_eq_intro_exp})-(\ref{eq:second_eq_intro_exp}), in two unknowns, $\sigma^2,\eta^2$. Note that in order to be able to obtain $\eta^2$ from these equations, at least one of the coefficients of $\eta^2$, either $\frac{\normophs{R}^2}{T}$ or $\frac{\normophs{R'}^2}{p}$ must be of larger order than $\frac{1}{\sqrt{T}}$, the order of deviations. Providing lower bounds on these quantities is one of the main technical contributions of this work.  This analysis uses the connection between the operator $S$ and the Laplacian on the line, and resolves the issue of translating spectrum estimates for the Laplacian into the spectral estimates for $R$. We note that there are no standard tools to study the spectrum of $R$, and our approach proceeds indirectly via the analysis of the nuclear norm of $O_uS$.  These results are stated in Theorem \ref{lem:R_properties}. In particular, we show that $\frac{\normophs{R}^2}{T}$ is $\Omega(\frac{1}{\log^2 T})$, which is the source of the log factor in (\ref{eq:corr_est_bound_eta}).

\begin{figure}
\centering
\subcaptionbox{
\label{fig:electric_spectrum}
Spectrum ($\chi_i^2$) of $R$ for electricity data (blue). $\normophs{R'(p)}^2/p$ as a function of $p$ (orange). The mean $\normophs{R}^2/T$ (green).  }{\includegraphics[width=0.50\textwidth]{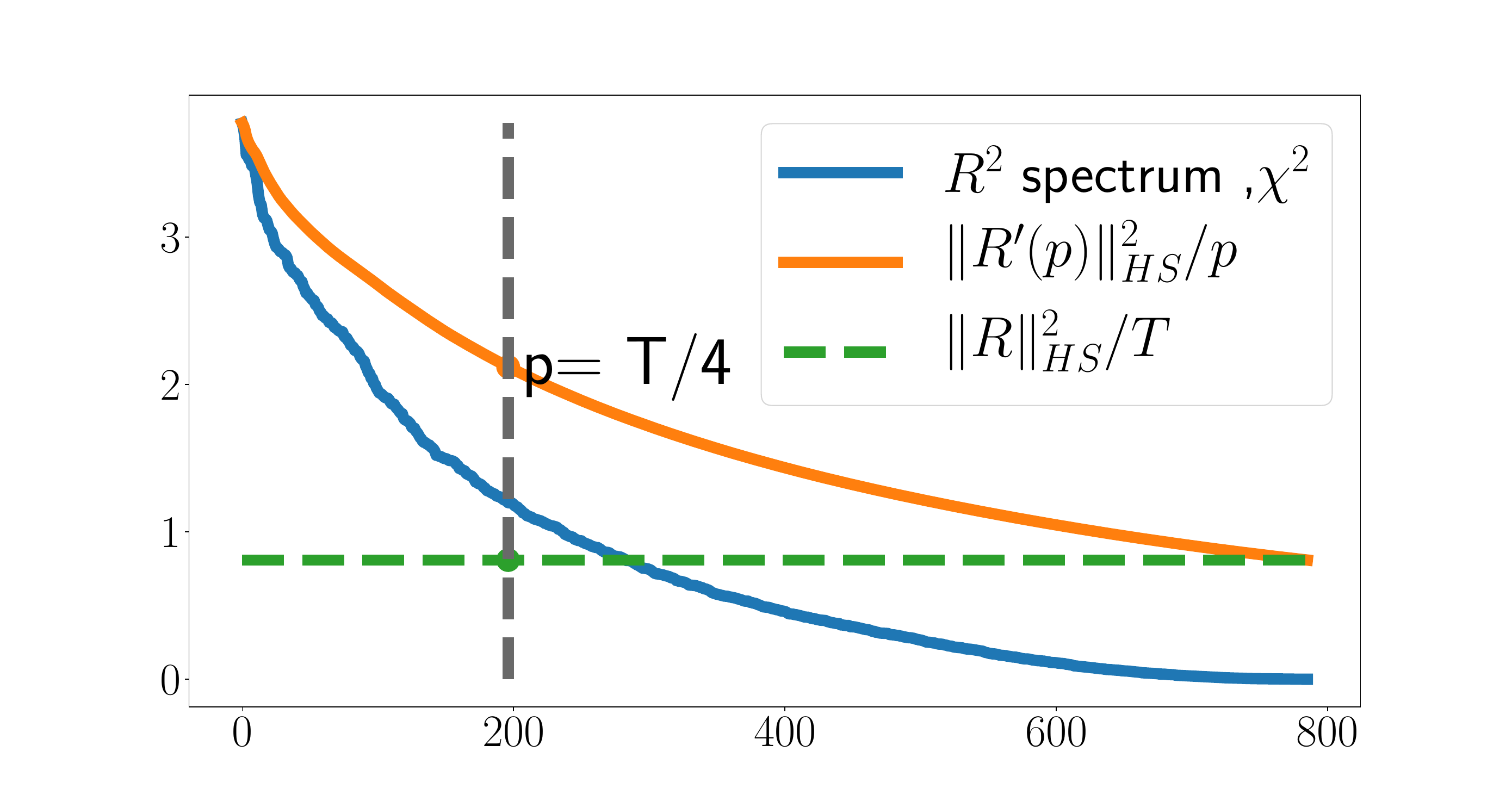}}%
\hfill 
\subcaptionbox{
\label{fig:spec_T500} 
Similar figure for synthetic data, $T=500$.  }{\includegraphics[width=0.50\textwidth]{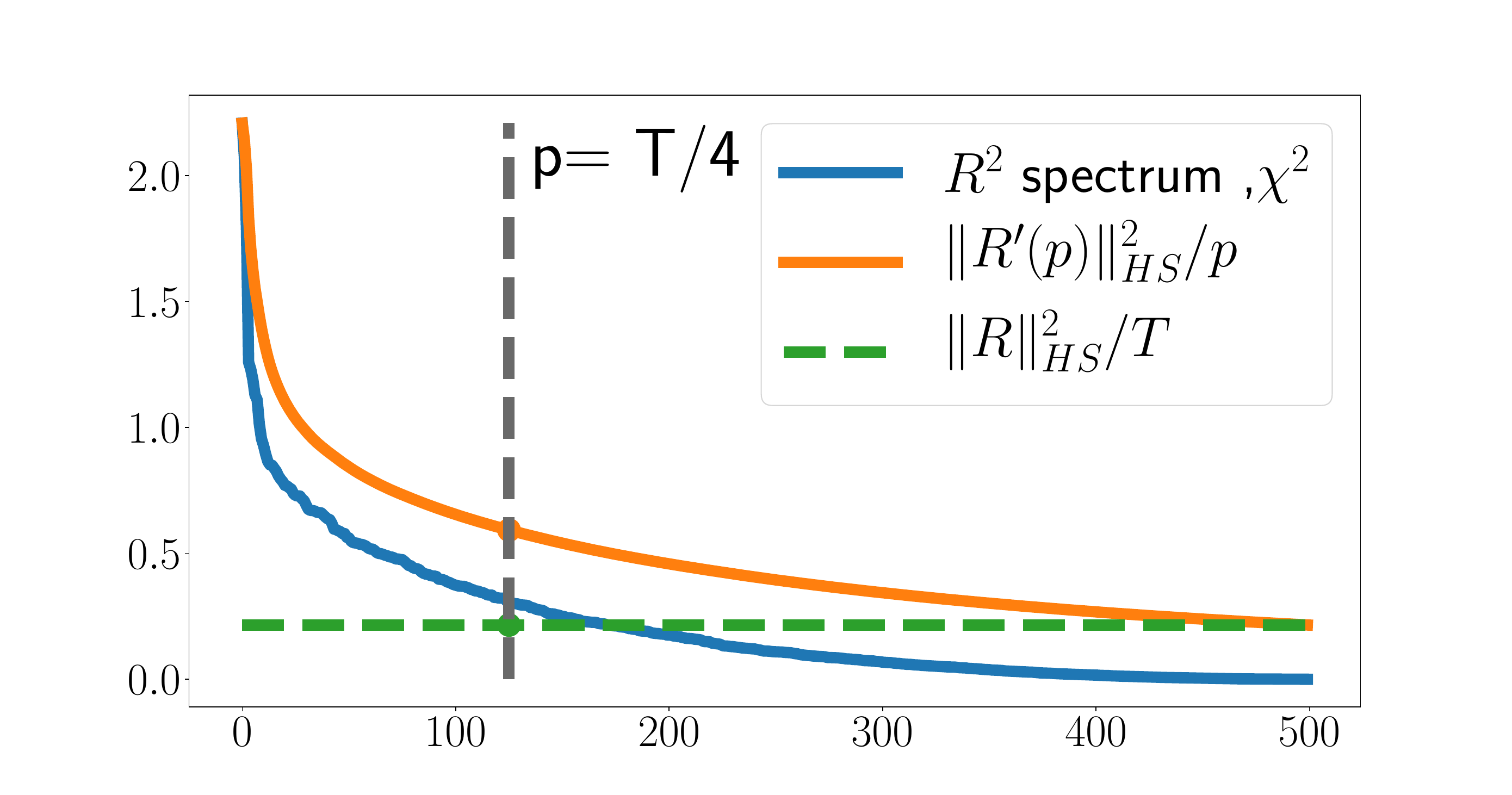}}%
\caption{Spectra of $R$}
\end{figure}

Finally, in order to solve the equations 
(\ref{eq:first_eq_intro_exp})-(\ref{eq:second_eq_intro_exp}), not only the equations must have large enough coefficients, but the equations must be \emph{different}. This is reflected by the term $\Brack{1- \frac{p\normophs{R}^2}{T\normophs{R'}^2}}^{-1}$ in (\ref{eq:corr_est_bound_sigma}), (\ref{eq:corr_est_bound_eta}). Equivalently, 
while $\frac{\normophs{R'}^2/p}{\normophs{R}^2/T} \geq 1$ by definition, we would like to have 

\begin{equation}
\label{eq:stable_bounds_desired_cond}
\frac{\normophs{R'}^2/p}{\normophs{R}^2/T} \geq 1+const 
\end{equation}
for the bounds (\ref{eq:corr_est_bound_sigma}), (\ref{eq:corr_est_bound_eta}) to be stable. Note that since both $\normophs{R'}^2$ and  $\normophs{R}^2$ are computed in Algorithm \ref{alg:ve}, the condition (\ref{eq:stable_bounds_desired_cond}) can simply be verified before the estimators $\widehat{\sigma^2}, \widehat{\eta^2}$ are returned. 

It is worth emphasizing that for simple choices of $p$, say $p=\frac{1}{4}T$, the condition (\ref{eq:stable_bounds_desired_cond}) does hold in practice. Note that, for any $p$, we can have $\frac{\normophs{R'}^2/p}{\normophs{R}^2/T} = 1$ only if the spectrum of $R$ is \emph{constant}. Thus 
(\ref{eq:stable_bounds_desired_cond}) amounts to stating that the spectrum of $R$ exhibits some decay.
As we show in experiments below, the spectrum (squared) of 
 $R$, for $u_t$ derived from daily temperature features, or for random Gaussian $u_t$, indeed decays. See Section \ref{sec:experiments},  Figures \ref{fig:electric_spectrum} and \ref{fig:spec_T500}.   In particular, in both cases (\ref{eq:stable_bounds_desired_cond}) holds with $const > 1$. Additional  bounds on the quantity $\Brack{1- \frac{p\normophs{R}^2}{T\normophs{R'}^2}}^{-1}$ under various assumptions on the sequence $u_t$ are given in Section \ref{sec:gap_analysis} of the Supplementary Material.

\section{Properties of $O_uS$ and $R$}
\label{sec:analysis}
As discussed in Section \ref{sec:overview} (see the discussion following eq. \eqref{eq:first_eq_intro_exp}), one of the crucial points enabling Algorithm \ref{alg:ve} and its analysis is the fact that the quantity $\frac{\normophs{R}^2}{T}$ is bounded below by an expression that is of  much higher order than the noise magnitude $\frac{1}{\sqrt{T}}$.

In this section we provide the formal statement of this and other associated results, and discuss the related arguments. First, we obtain the following bound on the spectrum of $O_uS$ (Lemma \ref{lem:OS_properties}, Supplementary Material Section \ref{sec:supp_os_properties}). Recall that the nuclear norm was defined in Section \ref{sec:notation},  and that for a sequence $u_t$ we set $\Abs{u_{max}} = \max_t \Abs{u_t}$ and $\Abs{u_{min}} = \min_t \Abs{u_t}$. Then:
\begin{align}
\label{eq:gamma_nuclear_bound_demo}
 \normopnuc{O_uS} = \sum_{t\leq T} \lambda_t(O_u S) \leq 4n \norms{u_{max}} T \log T .
\end{align}
The proof of this bound exploits the connection between $S$ and the Laplacian on the line. In particular, we use the fact that the eigenvalues of the Laplacian are known precisely, satisfying $\lambda_l = 2\sin \Brack{\frac{\pi (T-l) }{2T}}$. Next, we state the lower (and upper) bounds for $R$. 
\begin{thm} 
\label{lem:R_properties}
Let $R:\RR^T \rightarrow \RR^{Tn}$ be the pseudoinverse of $O_uS$. 
Then 
\begin{align}
c \frac{1}{ n \norms{u_{max}} \log T} &\leq \normop{R} \leq 2 \norms{u_{min}}^{-1}, \\  
 c \frac{1}{ n^2 \norms{u_{max}}^2 \log^2 T}  T &\leq 
 \normophs{R}^2 \leq 4 \norms{u_{min}}^{-2} T, \\ 
c \frac{1}{ n^4 \norms{u_{max}}^4 \log^4 T}  T &\leq 
\normophs{R^*R}^2 \leq 16 \norms{u_{min}}^{-4} T.   
\end{align}
\end{thm}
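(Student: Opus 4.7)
The plan is to derive all six inequalities directly from the estimates in Lemma \ref{lem:OS_properties}. Recall that $R$ has singular values $\chi_i = \gamma_{T+1-i}^{-1}$, where $\gamma_1 \geq \ldots \geq \gamma_T > 0$ are the singular values of $O_u S$; all $\gamma_i$ are strictly positive because $S$ is invertible and $O_u$ has full row rank $T$ under the standing assumption $\norms{u_{min}} > 0$. Consequently
\[
\normop{R} = \gamma_T^{-1}, \qquad \normophs{R}^2 = \sum_{i=1}^T \gamma_i^{-2}, \qquad \normophs{R^* R}^2 = \sum_{i=1}^T \gamma_i^{-4}.
\]
The three upper bounds are then immediate from the lower bound $\gamma_T \geq \half \norms{u_{min}}$ of Lemma \ref{lem:OS_properties}: it gives $\chi_1 = \gamma_T^{-1} \leq 2 \norms{u_{min}}^{-1}$, and the crude inequality $\sum_i \gamma_i^{-k} \leq T \chi_1^k$ with $k=2,4$ then yields the remaining two upper bounds.

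For the lower bounds, the only ingredient I will use is the nuclear norm estimate $\normopnuc{O_u S} = \sum_i \gamma_i \leq 4 n \norms{u_{max}} T \log T$ from Lemma \ref{lem:OS_properties}, which is where the real analytic content sits (and where the Laplacian/Chebyshev analysis of Lemma \ref{lem:Sinv_singular_values_expl} enters). Since $x \mapsto x^{-k}$ is convex on $(0,\infty)$ for any $k > 0$, Jensen's inequality applied to the uniform measure on $\Set{\gamma_1,\ldots,\gamma_T}$ gives
\[
\frac{1}{T} \sum_{i=1}^T \gamma_i^{-k} \;\geq\; \Brack{\frac{1}{T} \sum_{i=1}^T \gamma_i}^{-k} = \Brack{\frac{T}{\normopnuc{O_u S}}}^k,
\]
so that $\sum_{i=1}^T \gamma_i^{-k} \geq T^{k+1}/\normopnuc{O_u S}^k$. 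Taking $k=1$ and using $\normop{R} = \gamma_T^{-1} = \max_i \gamma_i^{-1} \geq T^{-1}\sum_i \gamma_i^{-1}$ gives $\normop{R} \geq T/\normopnuc{O_u S}$; taking $k=2$ and $k=4$ gives $\normophs{R}^2 \geq T^3/\normopnuc{O_u S}^2$ and $\normophs{R^*R}^2 \geq T^5/\normopnuc{O_u S}^4$. Substituting the nuclear norm bound and collecting powers yields precisely the three claimed lower bounds, with the $\log^{2k} T$ and $n^{2k}\norms{u_{max}}^{2k}$ factors in the denominators appearing when $\normopnuc{O_u S}$ is raised to the $k$-th power.

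There is no essential obstacle remaining once Lemma \ref{lem:OS_properties} is in hand; the present theorem essentially repackages the nuclear norm bound on $O_u S$ as lower bounds on the singular value sums of its pseudoinverse, together with the elementary upper bounds coming from $\gamma_T^{-1}$. The only subtle point worth flagging is that the $\log T$ factor in the nuclear norm estimate cannot be dropped, which is exactly why $\normop{R}$ cannot be lower bounded by a constant multiple of $1/(n\norms{u_{max}})$; this logarithmic loss is inherited from the $\sum_{t} \sin^{-1}(\pi(T+1-t)/(2T+2)) \asymp T \log T$ behaviour of the Chebyshev-type spectrum of $S'^{-1}$ through the chain of inequalities described above.
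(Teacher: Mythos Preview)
Your proof is correct. Both your argument and the paper's reduce everything to Lemma~\ref{lem:OS_properties}, and in particular to the nuclear norm bound $\normopnuc{O_uS} \leq 4n\norms{u_{max}} T\log T$; the upper bounds are handled identically.

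The difference is in how the nuclear norm bound is converted into lower bounds on $\sum_i \gamma_i^{-k}$. You apply Jensen's inequality to the convex function $x\mapsto x^{-k}$, obtaining $\sum_i \gamma_i^{-k} \geq T^{k+1}/\normopnuc{O_uS}^k$ in one line. The paper instead uses a Markov-type counting argument: from $\sum_i \gamma_i \leq 4n\norms{u_{max}} T\log T$ it deduces that at least $T/2$ of the $\gamma_i$ satisfy $\gamma_i^{-1} \geq c/(n\norms{u_{max}}\log T)$, and then sums over this set. Your route is slicker for the present statement. The paper's route, however, yields the intermediate fact \eqref{eq:set_of_large_inv_gammas} --- that a constant fraction of the $\chi_i$ are individually large --- which is reused later (in the proof of Corollary~\ref{cor:estimator_bounds}) to lower-bound $\normophs{R'}^2/p$ for truncated $R'$ with $p \geq T/4$; Jensen alone does not give that pointwise information.
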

Due to the complicated structure of $R$ as a pseudo-inverse of a composition of operators, there are no direct ways to control individual eigenvalues of $R$.
Thus the main technical issue resolved in Theorem \ref{lem:R_properties} is nevertheless obtaining lower bounds on $\normophs{R}^2$. Our approach is rather indirect, and we obtain these bounds from the nuclear norm bound (\ref{eq:gamma_nuclear_bound_demo}) via a Markov type inequality on the eigenvalues. 

\section{Experiments}
\label{sec:experiments}

\subsection{Synthetic Data}
\label{sec:synthetic}
In this section the performance of STVE is evaluated on synthetic data. The data was generated by the LDS (\ref{eq:lds_main_1})-(\ref{eq:lds_main_2}), 
using Gaussian noises with  $\sigma^2=0.5, \eta^2=2$. The input dimension was $n=5$, and the input sequence $u_t$ sampled from the Gaussian $N(0,I_n)$.

We run the STVE algorithm for different values of $T$, where for each $T$ we sampled the data $150$ times. 
Figure \ref{fig:synthetic_errors} shows the average (over 150 runs) estimation error for both process and observation noise variances for various values of $T$. As expected from the bounds in Corollary \ref{cor:estimator_bounds}, it may be observed in Figure \ref{fig:synthetic_errors} that
the estimation errors decay roughly at the rate of $const/\sqrt{T}$. A typical spectrum of $R$ is shown in Figure \ref{fig:spec_T500}. For larger $T$, the spectra also exhibits similar decay.

\subsection{Temperatures and Electricity Consumption}
\label{sec:electricity_experiments}

\begin{figure}
\centering
\subcaptionbox{
    STVE variance estimation errors vs $T$, and the $T^{-\half}$ decay.
    \label{fig:synthetic_errors}
}{\includegraphics[width=0.32\textwidth]{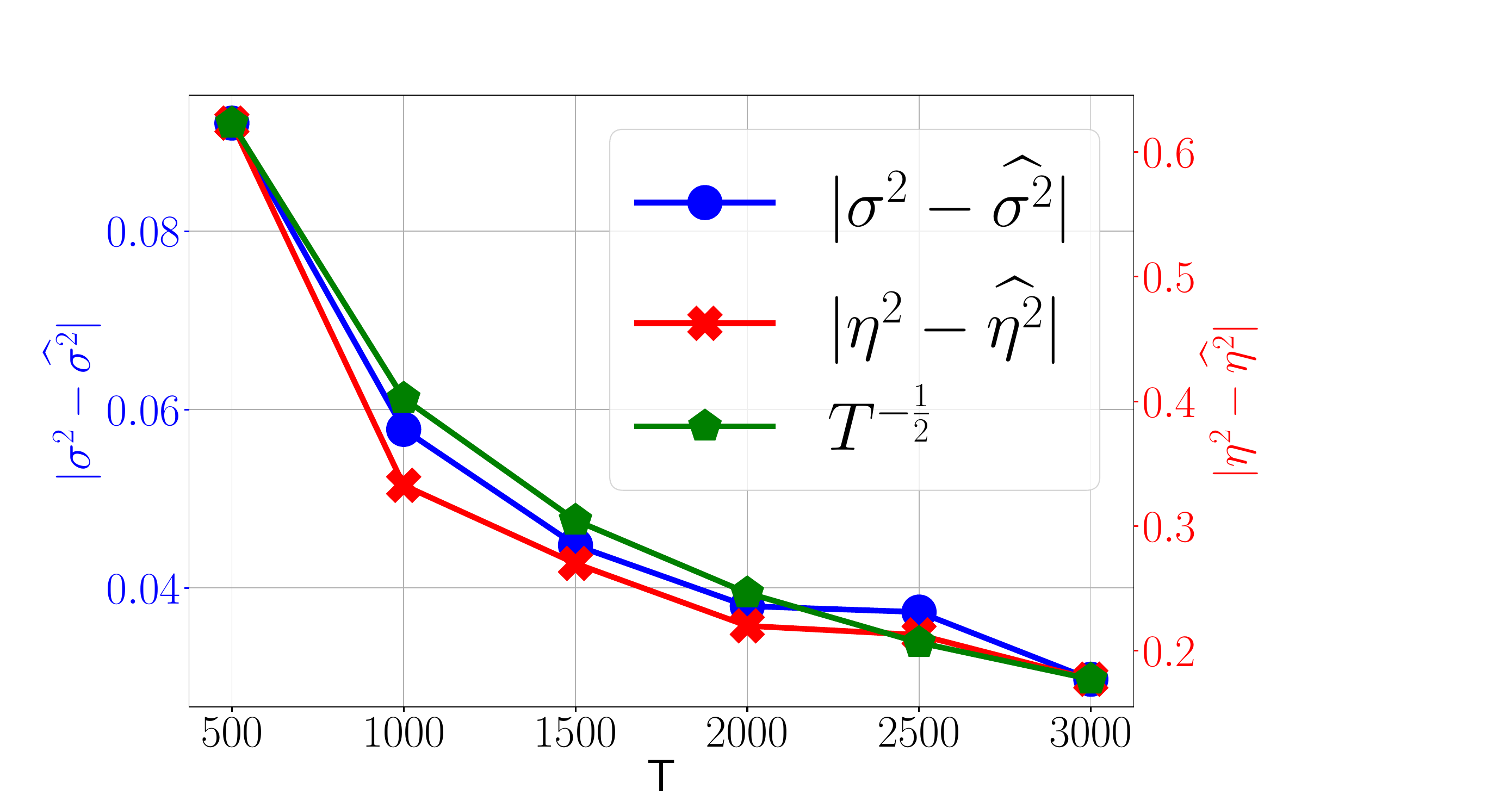}
}
\hfill 
\subcaptionbox{
\label{fig:electic_staionary_first_second}
Load (y-axis) against Temperature (x-axis), both axis normalized. The full data (blue points), regression learned on the first half (orange), regression learned on the second half (green).}{\includegraphics[width=0.32\textwidth]{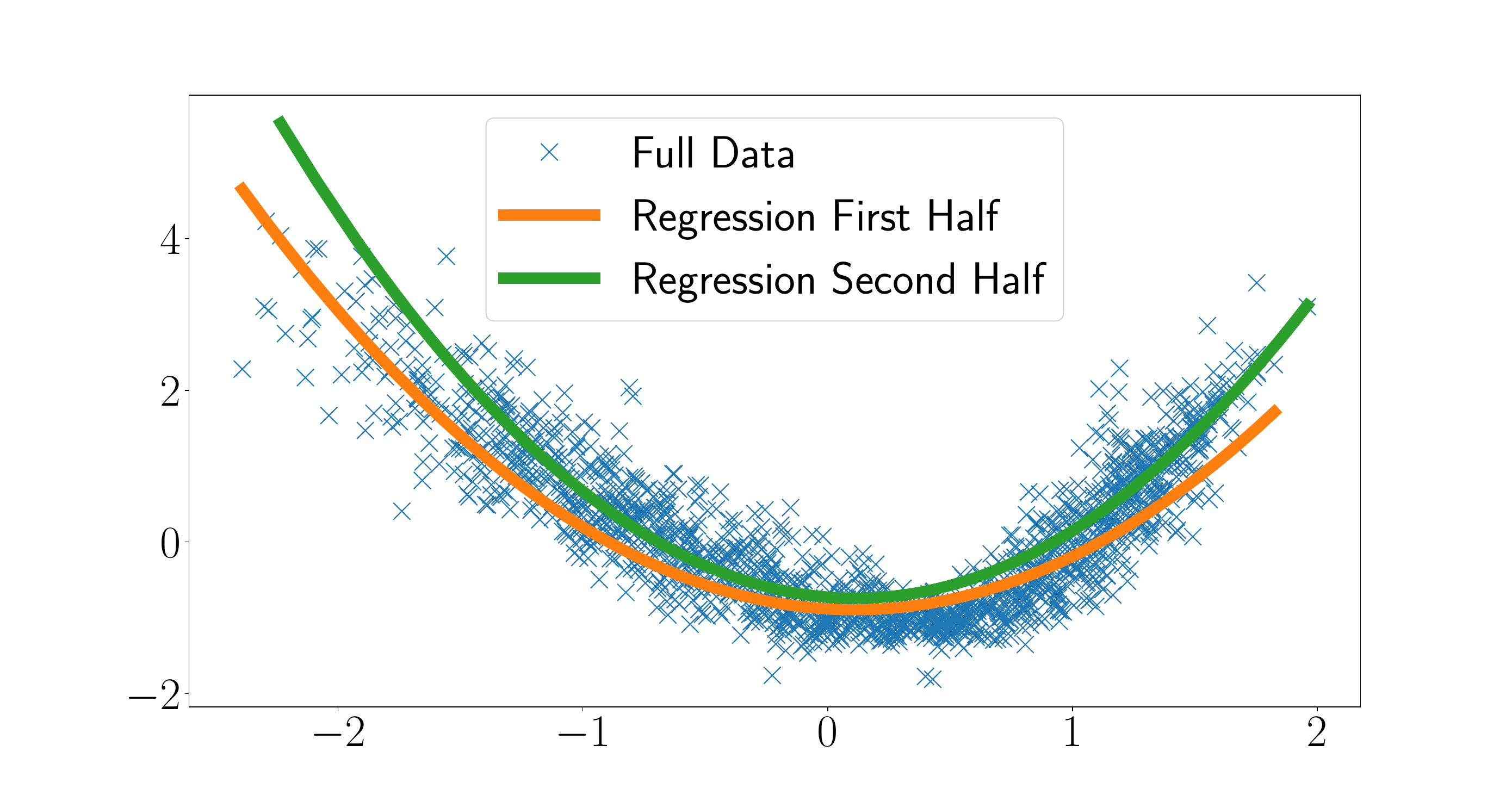}
}
\hfill 
\subcaptionbox{
\label{fig:electric_zone_0_err}
Smoothed prediction errors. Stationary Regression trained on first half (blue), MLE (orange), STVE (green), OG (red).
}{\includegraphics[width=0.32\textwidth]{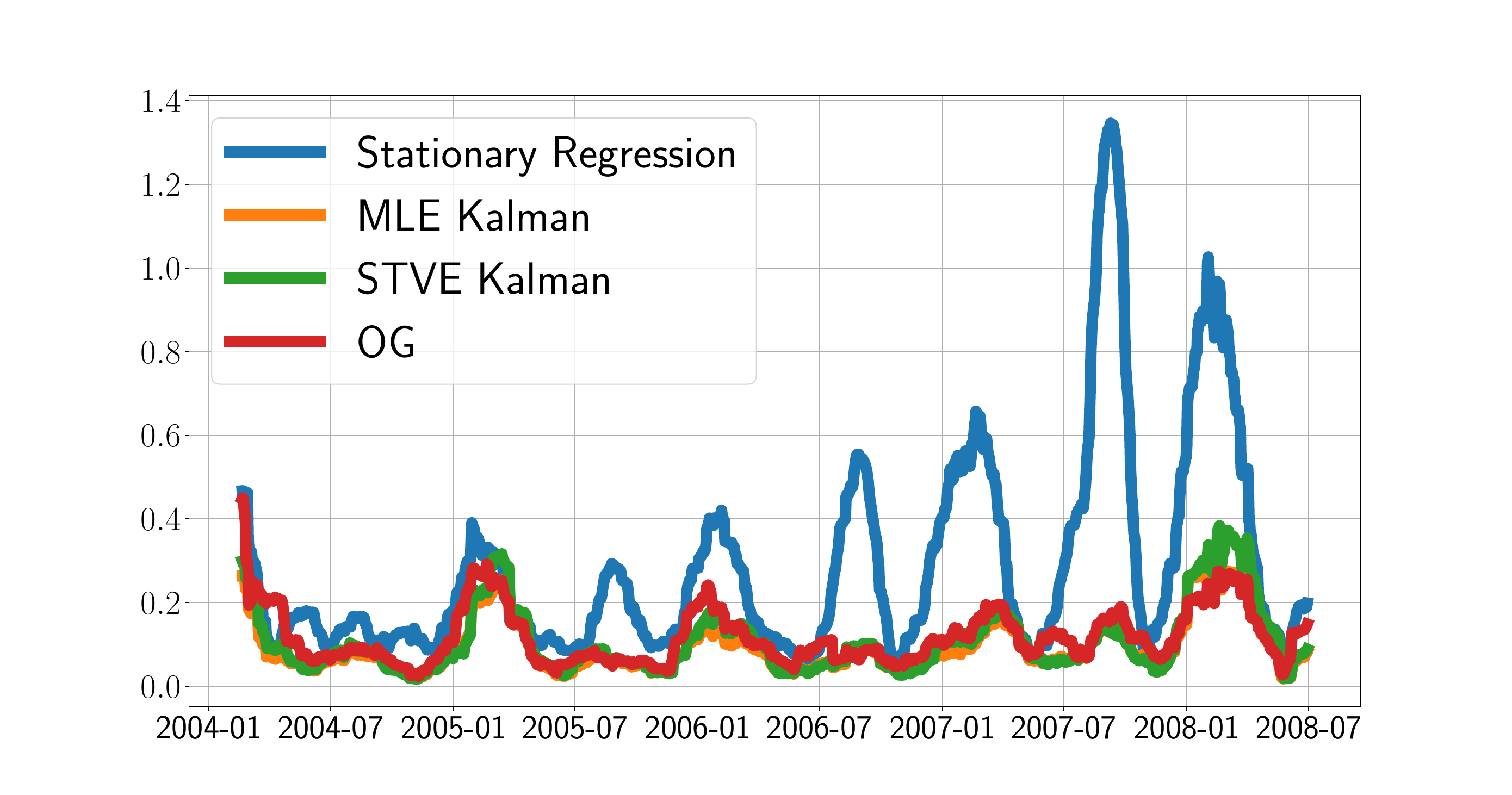}
}
\caption{Evaluation}
\end{figure}

In this section we examine the relation between daily 
temperatures and electricity consumption in the data from \cite{electric_data_paper} (see also \cite{electric_data_blog}). The following forecasting methods are compared: a stationary regression, an online gradient, and 
a Kalman filter for a dynamic regression, with parameters learned via MLE or STVE. 
We find that the Kalman filter methods provide the best performance, with no 
significant difference between STVE and MLE derived systems. 

The data consists of total daily electricity consumption (load) $y_t$, 
and the average daily temperature, $v_t$,
for a certain region, for the period Jan-$2004$ to Jun-$2008$. 
Full details on the preprocessing of the data, as well as additional details on the experiments,  are given in  Supplementary Material Section \ref{sec:electicity_additional_details}. Here we note that the data contains missing load values, for 9 non-consecutive weeks (out of about 234 weeks total).  All methods discussed here, including STVE, can naturally incorporate missing values, as discussed in Supplementary Material Section \ref{sec:missing_values}.

An elementary inspection of the data reveals that the 
load may be reasonably approximated as a quadratic 
function of the temperature,
$y_t = x_{t,1}\cdot 1 +  x_{t,2} \cdot v_t + x_{t,3} \cdot v_t^2$,
where $u_t = (1,v_t,v_t^2)$ is the observation vector 
(features), and $x_t = (x_{t,1}, x_{t,2},  x_{t,3})$ is 
the possibly time varying regression vector. 
This is shown in Figure 
\ref{fig:electic_staionary_first_second}, where we fit 
a stationary (time invariant) regression of the above form, 
using either only the 
first or only the second half of the data. We note 
that these regressions \emph{differ} -- the regression vector changes with time. It is therefore 
of interest to track it via online regression.

We use the first half of the data (train set) to learn the parameters $\sigma,\eta$ of the online 
regression (\ref{eq:lds_main_1})-(\ref{eq:lds_main_2}) via MLE optimization and using STVE. We also use the train set to find the 
optimal learning rate $\alpha$ for the OG forecaster described by the update equation (\ref{eq:online_grad_update}). This learning 
rate is chosen as the rate that yields smallest least squares forecast error on the train set. In addition, we learn a time independent, stationary regression on the first half of the data. 

We then employ the learned parameters to make predictions of the load given the temperature, by all four methods. 
The predictions for the system (\ref{eq:lds_main_1})-(\ref{eq:lds_main_2}) are made with a Kalman filter (at time $t$, we use the 
filtered state estimate $\tilde{x}_t$, which depends only on $y_1,\ldots,y_t$ and $u_1,\ldots,u_t$, and make the prediction 
$\tilde{y}_{t+1} = \inner{\tilde{x}_t}{u_{t+1}}$).

Daily squared prediction errors (that is, $\Brack{y_t - \tilde{y}_t}^2$) are shown in Figure \ref{fig:electric_zone_0_err} (smoothed with a moving average of 50 days). We see that the adaptive models (MLE, STVE, OG) outperform the stationary regression already on the train set (first half of the data), and that the difference in performance becomes dramatic on the second half (test). It is  also interesting to note that the performance of the Kalman filter based methods (MLE, STVE) is practically identical, but both are somewhat better than the simpler OG approach. 

We also note that by construction, we have $\Abs{u_{min}} \geq 1$ in this experiment, due to the constant $1$ coordinate, and 
also $\Abs{u_{max}} \leq 5$, due to the normalization. Since these operations are typical for any regression problem, 
we conclude that the direct influence of $\Abs{u_{min}}$ and  $\Abs{u_{max}}$ on the bounds in Theorem \ref{thm:main_thm} and Corollary \ref{cor:estimator_bounds} will not 
usually be significant.

\section{Conclusion And Future Work}
\label{sec:conclusion}
In this work we introduced the STVE algorithm for estimating the variance parameters of LDSs of type (\ref{eq:lds_main_1})-(\ref{eq:lds_main_2}), and obtained the first sample complexity guarantees for such estimators. We have also shown how the shape of the spectrum of $R$ can be exploited to obtain the estimators and the related bounds, thus providing the first explicit geometric parameter of the data that affects the bounds.

As discussed in Section \ref{sec:introduction}   and demonstrated in Section \ref{sec:experiments}, the system (\ref{eq:lds_main_1})-(\ref{eq:lds_main_2}) is of independent interest in applications. However, we also believe that the analysis presented here is an important first step towards a finite time data-dependent quantitative understanding of  general LDSs. and perhaps even non-linear dynamical systems.

\begin{ack}
This research was supported by the ISRAEL SCIENCE FOUNDATION
(grant No. 2199/20).
\end{ack}

\bibliographystyle{apalike}
\bibliography{oe}

\begin{thebibliography}{}

\bibitem[Adamczak, 2015]{adamczak}
Adamczak, R. (2015).
\newblock A note on the hanson-wright inequality for random vectors with
  dependencies.
\newblock {\em Electronic Communications in Probability}, 20.

\bibitem[Anava et~al., 2013]{anava13}
Anava, O., Hazan, E., Mannor, S., and Shamir, O. (2013).
\newblock Online learning for time series prediction.
\newblock In {\em {COLT} 2013 - The 26th Annual Conference on Learning Theory,
  June 12-14, 2013, Princeton University, NJ, {USA}}.

\bibitem[Anderson and Moore, 1979]{anderson1979}
Anderson, B. and Moore, J. (1979).
\newblock {\em Optimal Filtering}.
\newblock Prentice Hall.

\bibitem[Belanger, 1974]{BELANGER1974267}
Belanger, P.~R. (1974).
\newblock Estimation of noise covariance matrices for a linear time-varying
  stochastic process.
\newblock {\em Automatica}, 10(3).

\bibitem[Bhatia, 1997]{bhatia97matrix}
Bhatia, R. (1997).
\newblock {\em Matrix Analysis}.
\newblock Graduate Texts in Mathematics. Springer New York.

\bibitem[Campi and Weyer, 2005]{campi2005guaranteed}
Campi, M.~C. and Weyer, E. (2005).
\newblock Guaranteed non-asymptotic confidence regions in system
  identification.
\newblock {\em Automatica}, 41(10):1751--1764.

\bibitem[Chui and Chen, 2017]{chui2017kalman}
Chui, C. and Chen, G. (2017).
\newblock {\em Kalman Filtering: with Real-Time Applications}.
\newblock Springer International Publishing.

\bibitem[Dunik et~al., 2018]{DUNIK201816}
Dunik, J., Kost, O., and Straka, O. (2018).
\newblock Design of measurement difference autocovariance method for estimation
  of process and measurement noise covariances.
\newblock {\em Automatica}, 90.

\bibitem[Gohberg and Krein, 1969]{gohberg1969introduction}
Gohberg, I. and Krein, M. (1969).
\newblock {\em Introduction to the Theory of Linear Nonselfadjoint Operators}.
\newblock Translations of mathematical monographs. American Mathematical
  Society.

\bibitem[Hamilton, 1994]{hamilton1994time}
Hamilton, J. (1994).
\newblock {\em Time Series Analysis}.
\newblock Princeton University Press.

\bibitem[Hanson and Wright, 1971]{hansonwright}
Hanson, D.~L. and Wright, E.~T. (1971).
\newblock A bound on tail probabilities for quadratic forms in independent
  random variables.
\newblock {\em Ann. Math. Statist.}, 42.

\bibitem[Hazan, 2016]{Hazanbook2016}
Hazan, E. (2016).
\newblock Introduction to online convex optimization.
\newblock {\em Found. Trends Optim.}

\bibitem[Hazan et~al., 2017]{hazan2017online}
Hazan, E., Singh, K., and Zhang, C. (2017).
\newblock Online learning of linear dynamical systems.
\newblock In {\em Advances in Neural Information Processing Systems}, pages
  6686--6696.

\bibitem[Hong, 2016]{electric_data_blog}
Hong, T. (2016).
\newblock Tao hongs{'}s blog.
\newblock
  \url{http://blog.drhongtao.com/2016/07/gefcom2012-load-forecasting-data.html}.
\newblock Accessed: 1/8/2019.

\bibitem[Hong et~al., 2014]{electric_data_paper}
Hong, T., Pinson, P., and Fan, S. (2014).
\newblock Global energy forecasting competition 2012.
\newblock {\em International Journal of Forecasting}, 30:357--363.

\bibitem[Kozdoba et~al., 2019]{kalman_filter_decay}
Kozdoba, M., Marecek, J., Tchrakian, T.~T., and Mannor, S. (2019).
\newblock On-line learning of linear dynamical systems: Exponential forgetting
  in {K}alman filters.
\newblock {\em AAAI}.

\bibitem[{Mehra}, 1970]{MEHRA70}
{Mehra}, R. (1970).
\newblock On the identification of variances and adaptive kalman filtering.
\newblock {\em IEEE Transactions on Automatic Control}, 15(2).

\bibitem[Mitchell and Griffiths, 1980]{mitchell1980finite}
Mitchell, A. and Griffiths, D. (1980).
\newblock {\em The finite difference method in partial differential equations}.
\newblock Wiley-Interscience publication. Wiley.

\bibitem[Petris, 2010]{rdlm}
Petris, G. (2010).
\newblock An {R} package for dynamic linear models.
\newblock {\em Journal of Statistical Software}.

\bibitem[Qin, 2006]{qin2006overview}
Qin, S.~J. (2006).
\newblock An overview of subspace identification.
\newblock {\em Computers \& chemical engineering}, 30(10-12):1502--1513.

\bibitem[Rudelson et~al., 2013]{rudelson2013hanson}
Rudelson, M., Vershynin, R., et~al. (2013).
\newblock Hanson-wright inequality and sub-gaussian concentration.
\newblock {\em Electronic Communications in Probability}, 18.

\bibitem[Shumway and Stoffer, 2011]{shumwaystoffer}
Shumway, R. and Stoffer, D. (2011).
\newblock {\em Time Series Analysis and Its Applications (3rd ed.)}.

\bibitem[Tsiamis and Pappas, 2019]{tsiamis2019finite}
Tsiamis, A. and Pappas, G.~J. (2019).
\newblock Finite sample analysis of stochastic system identification.
\newblock In {\em 2019 IEEE 58th Conference on Decision and Control (CDC)},
  pages 3648--3654. IEEE.

\bibitem[van Overschee and de~Moor, 1996]{van1996subspace}
van Overschee, P. and de~Moor, L. (1996).
\newblock {\em Subspace identification for linear systems: theory,
  implementation, applications}.
\newblock Kluwer Academic Publishers.

\bibitem[Vershynin, 2018]{hdp2018}
Vershynin, R. (2018).
\newblock {\em High-Dimensional Probability: An Introduction with Applications
  in Data Science}.
\newblock Cambridge Series in Statistical and Probabilistic Mathematics.
  Cambridge University Press.

\bibitem[Vidyasagar and Karandikar, 2006]{vidyasagar2006learning}
Vidyasagar, M. and Karandikar, R.~L. (2006).
\newblock A learning theory approach to system identification and stochastic
  adaptive control.
\newblock {\em Probabilistic and randomized methods for design under
  uncertainty}, pages 265--302.

\bibitem[Wang et~al., 2017]{WANG201712}
Wang, H., Deng, Z., Feng, B., Ma, H., and Xia, Y. (2017).
\newblock An adaptive kalman filter estimating process noise covariance.
\newblock {\em Neurocomputing}, 223.

\bibitem[West and Harrison, 1997]{WestHarrison}
West, M. and Harrison, J. (1997).
\newblock {\em Bayesian Forecasting and Dynamic Models (2nd ed.)}.
\newblock Springer-Verlag.

\bibitem[Zinkevich, 2003]{Zinkevich2003}
Zinkevich, M. (2003).
\newblock Online convex programming and generalized infinitesimal gradient
  ascent.
\newblock ICML.

\end{thebibliography}

\section*{Checklist}

\begin{enumerate}

\item For all authors...
\begin{enumerate}
  \item Do the main claims made in the abstract and introduction accurately reflect the paper's contributions and scope?
    \answerYes{}
  \item Did you describe the limitations of your work?
    \answerYes{}
  \item Did you discuss any potential negative societal impacts of your work?
    \answerNA{ The main focus of this work is a theoretical analysis. }
  \item Have you read the ethics review guidelines and ensured that your paper conforms to them?
    \answerYes{}
\end{enumerate}

\item If you are including theoretical results...
\begin{enumerate}
  \item Did you state the full set of assumptions of all theoretical results?
    \answerYes{}
        \item Did you include complete proofs of all theoretical results?
    \answerYes{ Thus Supplementary Materail contains all the proofs. Proofs outlines are given in the main body of the paper. }
\end{enumerate}

\item If you ran experiments...
\begin{enumerate}
  \item Did you include the code, data, and instructions needed to reproduce the main experimental results (either in the supplemental material or as a URL)?
 \answerYes{ The data is publically available, see the references in Section \ref{sec:experiments}. The short code is not provided at the moment, but can be fully derived from Algorithm \ref{alg:ve}. }
  \item Did you specify all the training details (e.g., data splits, hyperparameters, how they were chosen)?
    \answerYes{ See Sections \ref{sec:experiments} and \ref{sec:electicity_additional_details}.}
    \item Did you report error bars (e.g., with respect to the random seed after running experiments multiple times)?
    \answerYes{}
    \item Did you include the total amount of compute and the type of resources used (e.g., type of GPUs, internal cluster, or cloud provider)?
    \answerYes{}
\end{enumerate}

\item If you are using existing assets (e.g., code, data, models) or curating/releasing new assets...
\begin{enumerate}
  \item If your work uses existing assets, did you cite the creators?
    \answerYes{}{}
  \item Did you mention the license of the assets?
    \answerNA{The link to the data author's documentation is provided.}
  \item Did you include any new assets either in the supplemental material or as a URL?
    \answerNo{}
  \item Did you discuss whether and how consent was obtained from people whose data you're using/curating?
    \answerNA{}
  \item Did you discuss whether the data you are using/curating contains personally identifiable information or offensive content?
    \answerNA{}
\end{enumerate}

\item If you used crowdsourcing or conducted research with human subjects...
\begin{enumerate}
  \item Did you include the full text of instructions given to participants and screenshots, if applicable?
    \answerNA{}
  \item Did you describe any potential participant risks, with links to Institutional Review Board (IRB) approvals, if applicable?
    \answerNA{}
  \item Did you include the estimated hourly wage paid to participants and the total amount spent on participant compensation?
    \answerNA{}
\end{enumerate}

\end{enumerate}

\appendix

\newpage
\onecolumn

{\LARGE \centering{\textbf{Finite Sample Analysis Of Dynamic Regression Parameter Learning - Supplementary Material}}} 
\newline

\begin{figure}
\centering
\includegraphics[width=.5\textwidth, height=5cm]{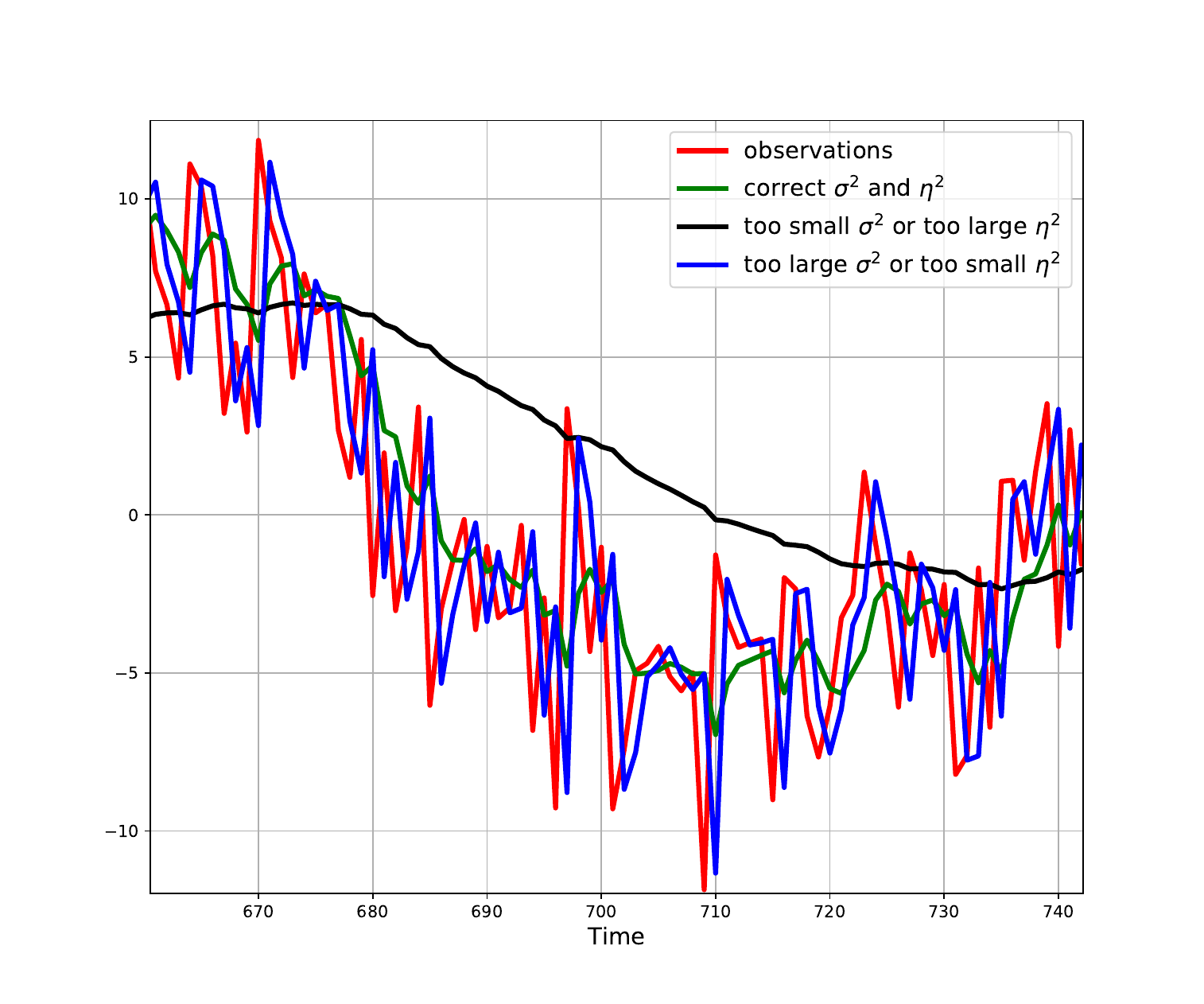}
\caption{Lag and Overfitting to the most recent observation,  for various variance values. See the discussion in Sections \ref{sec:introduction} and  \ref{sec:appendix_outline}.}
\label{fig:params}
\end{figure}

\section{Outline}
\label{sec:appendix_outline}

This Supplementary Material  is organized as follows: 
 The proofs of the results of Section \ref{sec:analysis}, including Theorem \ref{lem:R_properties}, as well as proofs of Theorems \ref{thm:main_thm}, and Corollary \ref{cor:estimator_bounds}, are given in Sections \ref{sec:Sinv_proof} to \ref{sec:coroll_proof}.  In Section \ref{gap_bound_general_proof} we state and prove additional bounds on the quantity  $\Brack{1- \frac{p\normophs{R}^2}{T\normophs{R'}^2}}^{-1}$ that appears in Corollary \ref{cor:estimator_bounds}. 
Section \ref{sec:missing_values} contains the discussion of missing values in STVE. Additional details on the electricity consumption experiment are given in Section \ref{sec:electicity_additional_details}.

Finally, we describe Figure \ref{fig:params}. The data (observations, red) was generated from the system \basesystem with one dimensional state ($n=1$), and we set $u_t=1$.  The states produced by the Kalman filter with ground truth values of $\sigma,\eta$ are shown in green, while states obtained from  other choices of parameters are shown in black and blue.

\section{Bounds on $O_uS$, Lemma \ref{lem:OS_properties}}
\label{sec:supp_os_properties}
In this Section we prove the following bounds on the spectrum of $O_uS$. See Section \ref{sec:analysis} for a discussion of these bounds. 
\begin{lem}
\label{lem:OS_properties}
The singular values of $O_u S$ satisfy  the following:
\begin{align}
\label{eq:gamma_upper_lower}
\lambda_1(O_u S) \leq  \norms{u_{max}} \cdot T  \mbox{ and } \lambda_T(O_u S) \geq \half \norms{u_{min}} \\
\label{eq:gamma_nuclear_bound}
 \normopnuc{O_uS} = \sum_{t\leq T} \lambda_t(O_u S) \leq 4n \norms{u_{max}} T \log T \\
\label{eq:gamma_hs_bound}
\frac{1}{4} \norms{u_{min}}^2 T^2 \leq \normophs{O_uS}^2 = \sum_{t\leq T} t \norms{u_t}^2 \leq \frac{1}{2} \norms{u_{max}}^2 T^2.
\end{align}
\end{lem}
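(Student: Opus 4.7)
The plan is to establish each of the four claims separately, working from the explicit block form of $O_uS$. As a $T\times Tn$ matrix, $O_uS$ has $(t,(i-1)n+j)$-entry $(u_t)_j \mathbf{1}_{i\leq t}$, so its $t$-th row is the concatenation of $t$ copies of $u_t^*$ followed by $T-t$ zero blocks. From this the Hilbert--Schmidt equality in \eqref{eq:gamma_hs_bound} is a one-line calculation, $\normophs{O_uS}^2 = \sum_{t,i,j}(u_t)_j^2 \mathbf{1}_{i\leq t} = \sum_t t\norms{u_t}^2$; the two outer bounds in \eqref{eq:gamma_hs_bound} follow by replacing $\norms{u_t}$ with $\norms{u_{\min}}$ or $\norms{u_{\max}}$ and evaluating the arithmetic series. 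The upper bound on $\lambda_1(O_uS)$ in \eqref{eq:gamma_upper_lower} I would derive from submultiplicativity: $O_u$ is block-diagonal with blocks $u_t^*$ so $\normop{O_u}\leq \norms{u_{\max}}$, and a direct Cauchy--Schwarz estimate on partial sums gives $\normop{S}\leq T$.

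For the lower bound $\lambda_T(O_uS) \geq \tfrac{1}{2}\norms{u_{\min}}$ in \eqref{eq:gamma_upper_lower} I would use the dual characterization $\lambda_T(O_uS) = \min_{\norms{v}=1}\norms{(O_uS)^* v}$. Computing the adjoint, the $i$-th $\RR^n$-block of $(O_uS)^* v$ equals $w_i = \sum_{t\geq i} v_t u_t$, so with the convention $w_{T+1}=0$ one has the telescoping identity $w_i - w_{i+1} = v_i u_i$ and therefore $\norms{w_i - w_{i+1}} \geq \Abs{v_i}\norms{u_{\min}}$. Applying the elementary inequality $\norms{a}^2+\norms{b}^2\geq\tfrac{1}{2}\norms{a-b}^2$ to consecutive blocks and summing, the left-hand side is bounded by $2\sum_i\norms{w_i}^2$ while the right-hand side is at least $\tfrac{1}{2}\norms{u_{\min}}^2\norms{v}^2$, yielding $\norms{(O_uS)^* v}^2\geq \tfrac{1}{4}\norms{u_{\min}}^2$.

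The nuclear-norm bound \eqref{eq:gamma_nuclear_bound} is the main step and the place where the Laplacian-on-the-line connection enters. Because $S$ acts independently on each of the $n$ coordinate components, it is unitarily equivalent to $n$ decoupled copies of the scalar summation operator $S'$, so its singular values are those of $S'$ each with multiplicity $n$ and $\normopnuc{S} = n\normopnuc{S'}$. Combining this with the standard Schatten inequality $\normopnuc{O_uS} \leq \normop{O_u}\normopnuc{S} \leq \norms{u_{\max}}\cdot n\normopnuc{S'}$ reduces the claim to showing $\normopnuc{S'}\leq 4T\log T$. For this I would observe that $S'^{-1}$ is the discrete difference operator, so $(S'^{-1})(S'^{-1})^*$ is the tridiagonal matrix with diagonal $(1,2,\ldots,2)$ and off-diagonals $-1$, i.e.\ the discrete Laplacian on $\{1,\ldots,T\}$ with mixed Dirichlet--Neumann boundary conditions, whose eigenvalues are the classical $4\sin^2((2k-1)\pi/(4T+2))$ for $k=1,\ldots,T$. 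Hence $\lambda_k(S') = 1/(2\sin((2k-1)\pi/(4T+2)))$, and the sine lower bound $\sin x \geq 2x/\pi$ on $[0,\pi/2]$ gives $\lambda_k(S') \leq (2T+1)/(2(2k-1))$; summing the odd-harmonic tail $\sum_{k=1}^T 1/(2k-1) \leq 1 + \log T$ then yields the desired estimate.

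The main obstacle is this last step. The naive bound $\normopnuc{S'}\leq \sqrt{T}\normophs{S'} = O(T^{3/2})$ from Cauchy--Schwarz is off by a factor of order $\sqrt{T}/\log T$, so one genuinely has to use the explicit Laplacian spectrum to extract the logarithmic-in-$T$ behaviour; the remaining pieces are either direct block-matrix computations or standard Schatten-norm submultiplicativity.
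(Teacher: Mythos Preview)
Your proposal is correct, and the Hilbert--Schmidt computation, the upper bound $\lambda_1(O_uS)\leq\norms{u_{\max}}T$, and the reduction $\normopnuc{O_uS}\leq\norms{u_{\max}}\cdot n\normopnuc{S'}$ are essentially what the paper does.

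Two pieces differ in an interesting way. For the lower bound $\lambda_T(O_uS)\geq\tfrac12\norms{u_{\min}}$, the paper argues geometrically: it first uses Lemma~\ref{lem:Sinv_singular_values_expl} to get $\norms{Sx}\geq\tfrac12\norms{x}$, so $S(B_{Tn})$ contains $\tfrac12 B_{Tn}$, then projects onto $(\ker O_u)^\perp$ and applies $\norms{O_u x}\geq\norms{u_{\min}}\norms{x}$ there. Your telescoping argument on the adjoint, combined with $\norms{a}^2+\norms{b}^2\geq\tfrac12\norms{a-b}^2$, is more elementary and self-contained; in particular it does not invoke any spectral information about $S$ at all. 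For the nuclear norm, the paper does not diagonalize $(S'^{-1})(S'^{-1})^*$ directly. Instead, Lemma~\ref{lem:Sinv_singular_values_expl} identifies $S'^{-1}_{T-1}$ with a compression $D_TP_V$ of the difference operator and applies Cauchy interlacing to the Dirichlet Laplacian $D_T^*D_T$, obtaining two-sided \emph{bounds} $2\sin\bigl(\tfrac{\pi(T-t)}{2(T+1)}\bigr)\leq\lambda_t(S'^{-1}_T)\leq2\sin\bigl(\tfrac{\pi(T+1-t)}{2(T+1)}\bigr)$ rather than exact values. Your route---recognizing $(S'^{-1})(S'^{-1})^*$ as the tridiagonal $(1,2,\ldots,2)$ matrix with mixed Dirichlet--Neumann boundary and reading off the exact eigenvalues $4\sin^2\bigl(\tfrac{(2k-1)\pi}{4T+2}\bigr)$---is cleaner for the present purpose and yields the same harmonic-sum estimate $\normopnuc{S'}\leq 4T\log T$. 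The trade-off is that the paper's interlacing lemma is formulated as a stand-alone result (Lemma~\ref{lem:Sinv_singular_values_expl}) and is reused later, for instance in the proof of Lemma~\ref{lem:gap_bound}, so that indirection pays off elsewhere.
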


The proof of this lemma uses the following auxiliary result:
Let $D_T: \RR^T \rightarrow \RR^{T-1}$ be the difference operator, 
 $(D_Tx)_t = x_{t+1} - x_t  \mbox{ for $t\leq T-1$}$. 
In the field of Finite Difference methods, the operator $D_TD_T^*$ is known as the Laplacian on the line, or as the discrete derivative with Dirichlet boundary conditions, and is well studied. The eigenvalues 
of $D_TD_T^{*}$ may be derived by a direct computation, and correspond to the roots of the Chebyshev polynomial of second kind of order $T$. In particular, the following holds:
\begin{lem}
\label{lem:D_singularvalues}
The operator $D_T$ has kernel of dimension $1$ and singular values 
$\lambda_l(D_T) = 2\sin \Brack{\frac{\pi (T-l) }{2T}}$
for $l=1,\ldots,T-1$.
\end{lem}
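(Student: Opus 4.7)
The plan is to reduce the problem to diagonalizing the symmetric positive semidefinite operator $D_T D_T^{*}$ on $\RR^{T-1}$, since the nonzero singular values of $D_T$ are exactly the square roots of the eigenvalues of $D_T D_T^{*}$. The kernel statement is immediate from the definition: $D_T x = 0$ forces $x_{t+1} = x_t$ for every $t \leq T-1$, so $x$ is constant and $\ker D_T = \operatorname{span}\{(1,\ldots,1)\}$ has dimension exactly $1$. By rank-nullity this leaves $T-1$ strictly positive singular values, matching the dimension of the range $\RR^{T-1}$.

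Next, I would compute the adjoint $D_T^{*}$ by summation by parts and then write $D_T D_T^{*}$ out explicitly. A direct calculation, splitting into the interior coordinates $2 \leq t \leq T-2$ and the two boundary coordinates $t=1$ and $t=T-1$, shows that $D_T D_T^{*}$ is the $(T-1)\times(T-1)$ tridiagonal matrix with $2$ on the diagonal and $-1$ on the immediate off-diagonals. Equivalently, extending $y$ by the Dirichlet convention $y_0 = y_T = 0$, we have $(D_T D_T^{*} y)_t = -y_{t-1} + 2 y_t - y_{t+1}$ for $1 \leq t \leq T-1$, which is the classical discrete Laplacian with Dirichlet boundary conditions.

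Finally I would diagonalize this Laplacian with the standard sinusoidal ansatz. Substituting $y_t = \sin(\theta t)$ into the recurrence gives $-\sin(\theta(t-1)) + 2\sin(\theta t) - \sin(\theta(t+1)) = (2 - 2\cos\theta)\sin(\theta t) = 4\sin^2(\theta/2)\sin(\theta t)$, so any such $y$ is a formal eigenvector with eigenvalue $4\sin^2(\theta/2)$. The condition $y_0 = 0$ is automatic, while $y_T = \sin(\theta T) = 0$ forces $\theta = k\pi/T$ with $k \in \{1,\ldots,T-1\}$. The discrete-sine orthogonality $\sum_{t=1}^{T-1} \sin(k\pi t/T)\sin(k'\pi t/T) = (T/2)\delta_{kk'}$ guarantees that these $T-1$ vectors form an eigenbasis, so the full spectrum of $D_T D_T^{*}$ is $\{4\sin^2(k\pi/(2T)) : k = 1,\ldots,T-1\}$. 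Taking square roots and re-indexing in the decreasing convention via $k = T-l$ yields $\lambda_l(D_T) = 2\sin(\pi(T-l)/(2T))$, which is the stated formula. There is no serious obstacle here; the only point worth checking carefully is completeness of the sine eigenbasis, which follows from the stated orthogonality relation together with the already-established count of $T-1$ singular values.
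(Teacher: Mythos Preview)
Your proof is correct and complete. The paper does not actually supply a proof of this lemma; it simply remarks that $D_T D_T^{*}$ is the discrete Dirichlet Laplacian whose eigenvalues correspond to roots of the Chebyshev polynomial of the second kind, and defers to \cite{mitchell1980finite} for details. Your direct computation of $D_T D_T^{*}$ as the tridiagonal $(2,-1)$ matrix followed by the sinusoidal ansatz is the standard textbook argument and is more self-contained than what the paper offers; the Chebyshev connection the paper alludes to is equivalent, since the characteristic polynomial of that tridiagonal matrix satisfies the same three-term recurrence as the Chebyshev polynomials.
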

We refer to \cite{mitchell1980finite} for the proof 
of Lemma \ref{lem:D_singularvalues}. 
Next, in Lemma \ref{lem:Sinv_singular_values_expl}, we show that the inverse of the operator $S'$, defined in (\ref{eq:Sprime_def}), is a one dimensional perturbation of $D_T$, which implies bounds on singular values of $S'^{-1}$. 
\begin{lem}  
\label{lem:Sinv_singular_values_expl}
The singular values of $S'^{-1}_T$ satisfy 
\begin{equation}
2\sin \Brack{\frac{\pi (T-t)}{2(T+1)}} \leq \lambda_t (    S'^{-1}_T ) 
\leq 
2 \sin \Brack{\frac{\pi (T+1-t)}{2(T+1)}}
\end{equation}
for $1\leq t\leq T-1$, and 
\begin{equation}
\frac{1}{T} \leq \lambda_T (    S'^{-1}_T ) 
\leq 2 \sin \Brack{\frac{\pi }{2(T+1)}}. 
\end{equation}
\end{lem}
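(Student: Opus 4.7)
My plan is to relate $S'^{-1}$ to the difference operator $D_{T+1}$ of Lemma \ref{lem:D_singularvalues} via an isometric embedding, and then transfer the explicit spectrum of $D_{T+1}$ to $S'^{-1}$ by Cauchy interlacing. Writing $S'^{-1}$ coordinate-wise from the definition (\ref{eq:Sprime_def}) gives $(S'^{-1}y)_1 = y_1$ and $(S'^{-1}y)_t = y_t - y_{t-1}$ for $2 \leq t \leq T$. Defining the embedding $\iota : \RR^T \hookrightarrow \RR^{T+1}$ by $\iota(y) = (0, y_1, \ldots, y_T)$, one checks directly that $S'^{-1} = D_{T+1} \iota$. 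Consequently $(S'^{-1})^{*} S'^{-1} = \iota^{*} D_{T+1}^{*} D_{T+1} \iota$ is exactly the $T \times T$ principal submatrix of $D_{T+1}^{*} D_{T+1}$ obtained by deleting the first row and column; equivalently, it is the rank-one perturbation of $D_T^{*}D_T$ by $e_1 e_1^{*}$, which is the ``one-dimensional perturbation'' alluded to before the statement.

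By Cauchy interlacing for principal submatrices, if $\nu_1 \leq \nu_2 \leq \cdots \leq \nu_{T+1}$ are the eigenvalues of $D_{T+1}^{*} D_{T+1}$ in increasing order and $\mu_1 \leq \cdots \leq \mu_T$ those of $(S'^{-1})^{*} S'^{-1}$, then $\nu_k \leq \mu_k \leq \nu_{k+1}$ for $k = 1, \ldots, T$. Lemma \ref{lem:D_singularvalues} gives that $D_{T+1}$ has a one-dimensional kernel and nonzero singular values $2\sin(\pi(T+1-l)/(2(T+1)))$, $l=1,\ldots,T$, so sorted in increasing order the eigenvalues of $D_{T+1}^{*} D_{T+1}$ are $\nu_k = 4 \sin^2(\pi(k-1)/(2(T+1)))$. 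Since the lemma indexes singular values of $S'^{-1}$ in \emph{decreasing} order, $\lambda_t(S'^{-1}) = \sqrt{\mu_{T+1-t}}$, and the interlacing bounds rewrite as
\[
2\sin\!\Brack{\frac{\pi (T-t)}{2(T+1)}} \leq \lambda_t(S'^{-1}_T) \leq 2\sin\!\Brack{\frac{\pi (T+1-t)}{2(T+1)}}
\]
for every $t = 1, \ldots, T$, which yields both bounds for $1 \leq t \leq T-1$ and the upper bound for $t = T$.

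To finish I need the sharper lower bound $\lambda_T(S'^{-1}_T) \geq 1/T$, since interlacing against $\nu_T = 0$ yields only the trivial $0$. Because $S'$ is invertible, $\lambda_T(S'^{-1}_T) = 1/\normop{S'}$, and a direct estimate
\[
\norms{S'x}^2 = \sum_{t=1}^T \Brack{\sum_{j\leq t} x_j}^2 \leq \sum_{t=1}^T t \cdot \norms{x}^2 = \tfrac{T(T+1)}{2} \norms{x}^2
\]
gives $\normop{S'} \leq \sqrt{T(T+1)/2} \leq T$, hence $\lambda_T(S'^{-1}_T) \geq 1/T$, as required.

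The only genuine subtlety is the index bookkeeping between the increasing-order eigenvalues produced by interlacing and the decreasing-order singular values used in the lemma statement; in particular, one must choose the ``parent'' operator to be $D_{T+1}$ rather than $D_T$ in order for the interlacing denominators to match the factor $2(T+1)$ appearing in the statement, and a careless choice would instead produce $2T$ and bounds of the wrong form. Everything else is a direct consequence of Lemma \ref{lem:D_singularvalues} together with the elementary operator-norm estimate above.
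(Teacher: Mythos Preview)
Your proof is correct and takes essentially the same approach as the paper: both identify $(S'^{-1})^{*}S'^{-1}$ as a principal compression of $D^{*}D$ and invoke Cauchy interlacing together with Lemma~\ref{lem:D_singularvalues} for the main bounds, and both obtain the missing lower bound $\lambda_T(S'^{-1}_T)\geq 1/T$ from the elementary Cauchy--Schwarz estimate $\normop{S'_T}\leq T$. The only cosmetic difference is that the paper compresses $D_T$ to obtain bounds for $S'^{-1}_{T-1}$ and then reindexes, whereas you go directly to $S'^{-1}_T$ via the embedding $\iota$ into $\RR^{T+1}$ and $D_{T+1}$; your choice avoids the extra index shift and is arguably tidier.
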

The proof of this is given in the next section. With the estimates of Lemma \ref{lem:Sinv_singular_values_expl}, the bounds in Lemma \ref{lem:OS_properties} follow.
Proof of Lemma \ref{lem:OS_properties}:
\begin{proof}
By Lemma \ref{lem:Sinv_singular_values_expl}, 
\begin{equation}
\label{eq:S'_lower}
\normop{S} \leq T 
\text{ and } \norms{S'_Tx} \geq \frac{1}{2} \norms{x} \text{ for all $x \in \RR^T$}.
\end{equation}

Note also that $S$ is by definition a collection of $n$ independent copies of $S'_T$, and therefore 
the spectrum of $S$ is that of $S'_T$, but each singular value is taken with multiplicity $n$.
In particular it follows that (\ref{eq:S'_lower}) holds also for $S$ itself. 
Since clearly $\normop{O_u} \leq \norms{u_{max}}$, the upper bound on $\normop{O_uS}$ in (\ref{eq:gamma_upper_lower}) 
 follows from (\ref{eq:S'_lower}).

For the lower bound, denote by $V'$ 
the orthogonal complement to the kernel of $O_u$,
$V' = \Brack{Ker(O_u)}^{\perp}$.
 Denote by 
$P_{V'} : \RR^{Tn} \rightarrow V'$ the orthogonal projection onto $V'$. 
We have in particular that $O_uS = O_u P_{V'}S$. Next, the operator 
$S$ maps the unit ball $B_{Tn}$ of $\RR^{Tn}$ into an ellipsoid $\mathcal{E}_{T}$, and by 
(\ref{eq:S'_lower}), we have $\half B_{Tn} \subset \mathcal{E}_{T}$. It therefore follows that 
\begin{equation}
\label{eq:lower_bound_ball_map}
\half B_{V_1} \subset (P_{V'}S)(B_{Tn}), 
\end{equation} 
where $B_{V_1}$ is the unit ball of $V_1$. It remains to observe that for every $x\in V_1$, 
we have 
\begin{equation}
\label{eq:lower_bound_O_u}
\norms{O_u x} \geq \norms{u_{min}} \norms{x}. 
\end{equation}
Combining (\ref{eq:lower_bound_ball_map}) and (\ref{eq:lower_bound_O_u}), we obtain the lower bound in (\ref{eq:gamma_upper_lower}). 

To derive (\ref{eq:gamma_nuclear_bound}), recall that the nuclear norm is 
sub-multiplicative with respect to the operator norm:
\begin{equation}
\label{eq:nuc_sub_mult}
\normopnuc{O_uS} \leq \normop{O_u} \cdot \normopnuc{S}  \leq \norms{u_{max}} \cdot \normopnuc{S}.
\end{equation}
This follows for instance from the characterization of the nuclear norm as trace dual of the operator norm \citep[Propositions IV.2.11, IV.2.12]{bhatia97matrix}.  Next, since the spectrum of $S$ is the spectrum of $S'$ taken with multiplicity $n$, we have 
\begin{equation}
\label{eq:nuc_S_to_S_prime}
\normopnuc{S} = n \normopnuc{S'},
\end{equation}
and it remains to bound the nuclear norm of $S'$.

Using the inequality 
\begin{equation}
\label{eq:sin_near_zero_ineq}
\sin \frac{\pi}{2} \alpha \geq \alpha   \mbox{ for all $\alpha \in [0,1]$}, 
\end{equation}
and Lemma \ref{lem:Sinv_singular_values_expl}, we have 
\begin{equation}
\label{eq:nuc_harmonic}
\normopnuc{S'} = T + 2\sum_{t\leq T-1} \sin^{-1}\Brack{\frac{\pi T-t}{2(T+1)}} \leq 2 T\sum_{t\leq T} \frac{1}{t+1}  \leq 
4T\log T. 
\end{equation}
Combining (\ref{eq:nuc_sub_mult}), (\ref{eq:nuc_S_to_S_prime}) and (\ref{eq:nuc_harmonic}), 
the inequality (\ref{eq:gamma_nuclear_bound}) follows.

It remains to estimate the Hilbert-Schmidt norm of 
$O_uS$, which can be done by a direct computation. Recall that for any operator $A:\RR^m \rightarrow \RR^m$ the Hilbert-Schmidt norm satisfies
\begin{equation}
    \normophs{A}^2 = \tr A^* A = \sum_{i\leq m} \inner{A^{*}A \phi_i}{\phi_i} = \sum_{i\leq m} \inner{A \phi_i}{A \phi_i} = 
    \sum_{i\leq m} \norms{A \phi_i}^2, 
\end{equation}
for any orthonormal basis $\phi_i$ in $\RR^{m}$. 
Let $e_{ti}$ be the standard basis vector in $\RR^{Tn}$ corresponding to coordinate $i\leq n$ at time $t\leq T$. 
Let $e_{t'}$, $t'\leq T$ denote the standard basis in $\RR^T$. Then 
\begin{flalign}
O_u S e_{ti} = 
\sum_{t' = t}^T u_{t'i} e_{t'},
\end{flalign}
where $u_{t'i}$ is the $i$-th coordinate of $u_{t'}$. It follows that
\begin{flalign}
\norms{O_u S e_{ti}}^2 = 
\sum_{t' = t}^T u_{t'i}^2 
\end{flalign}
and hence 
\begin{flalign}
\label{eq:gamma_hs_os_last_eq}
\normophs{O_uS}^2 =  \sum_{t\leq T, i\leq n} \norms{O_u S e_{ti}}^2 = 
\sum_{t\leq T} \sum_{t' = t}^T \norms{u_{t'}}^2 = \sum_{t\leq T} t \norms{u_t}^2.
\end{flalign}
The bounds (\ref{eq:gamma_hs_bound}) follow directly from (\ref{eq:gamma_hs_os_last_eq}).
\end{proof}

\section{Proof of Lemma \ref{lem:Sinv_singular_values_expl}}
\label{sec:Sinv_proof}
\begin{proof}
 Recall that the operator $S'^{-1}_T$ is given by 
  $S'^{-1}_T x = (x_1,x_2-x_1,\ldots,x_T-x_{T-1})$ and the operator $D=D_T:\RR^T \rightarrow \RR^{T-1}$ is given by $Dx = (x_2-x_1,\ldots,x_{T}-x_{T-1})$. 
Let $V = \text{span}\Set{e_2, \ldots, e_T}$ be the subspace spanned by all but the first coordinate. Let $P_V : \RR^T \rightarrow \RR^T$ be the projection onto $V$, i.e. a restriction to second to $T$'th coordinate. Observe that the action of $DP_V$ is equivalent to that of $S'^{-1}_{T-1}$. Therefore the singular values of $S'^{-1}_{T-1}$ are identical to those of $DP_V$. To obtain bounds on the singular values of $DP_V$, note that $(DP_V)^* DP_V = P_V D^* D P_V$ -- that is, 
$P_V D^* D P_V$ is a \emph{compression} of $D^*D$. Thus, by the Cauchy's Interlacing Theorem \cite[Corollary III.1.5]{bhatia97matrix}, 
\begin{equation}
\lambda_t(D^* D ) \geq   \lambda_t(P_V D^* D P_V)  \geq \lambda_{t+1}(D^* D )
\end{equation}
for all $t\leq T-1$. In conjunction with Lemma \ref{lem:D_singularvalues} this provides us with the estimates for all but the smallest singular value of $S'^{-1}_{T-1}$ (since $\lambda_{T}(D^* D ) = 0$). We therefore estimate 
$\lambda_{T-1}(S'^{-1}_{T-1}) = \lambda_T( D P_V)$ directly, by 
bounding the norm of $S'_{T-1}$. Indeed, for any $T$, by the  Cauchy-Schwartz inequality, 
\begin{flalign}
\norms{S'_{T} x}^2  = \sum_{t\leq T } \Brack{\sum_{i\leq t} x_i}^2 \leq \sum_{t\leq T } \Brack{\sum_{i\leq t} x_i^2}\Brack{\sum_{i\leq t} 1} \\
= \sum_{t\leq T } t \Brack{\sum_{i\leq t} x_i^2} 
\leq 
 \Brack{\sum_{t\leq T } t} \norms{x}^2
\leq \norms{x}^2 T^2. 
\end{flalign}
Thus we have $\normop{S'_T}\leq T$, which concludes the proof of the Lemma. 
\end{proof}

\section{Proof of Theorem  \ref{lem:R_properties}}
\label{sec:supp_r_properties}
\begin{proof}
The bound on $\normop{R}$ follows directly from 
the lower bound on the singular values of $O_uS$ in (\ref{eq:gamma_upper_lower}). 
Since $R$ is of rank $T$, the upper bounds on $\normophs{R}$ follow directly from the 
 $\normop{R}$ bound.

The lower bounds on $\normophs{R}^2$
and $\normophs{R^*R}^2$ follow from the upper bounds on the nuclear norm in Lemma  \ref{lem:OS_properties}. Note that this argument would not have worked if we only had upper bounds on the Hilbert-Schmidt norm, rather than the nuclear norm in Lemma \ref{lem:OS_properties}. 
 Denote $\gamma_i = \lambda_i(O_uS)$. From (\ref{eq:gamma_nuclear_bound}) in Lemma \ref{lem:OS_properties}, the number of  $\gamma_i$ that are larger than 
$4 n\norms{u_{max}} \log T$ satisfies 
\begin{equation}
\label{eq:set_of_large_gammas}
\#\Set{\gamma_i \setsep \gamma_i \geq 4 n\norms{u_{max}} \log T} \leq \frac{T}{2}. 
\end{equation}
Since there are total of $T$ singular values $\gamma_i$ overall, we can equivalently rewrite (\ref{eq:set_of_large_gammas}) as 
\begin{equation}
\label{eq:set_of_large_inv_gammas}
\#\Set{\gamma_i \setsep \gamma^{-1}_i \geq \frac{1}{4 n\norms{u_{max}} \log T}} \geq \frac{T}{2}. 
\end{equation}
This immediately implies the lower bounds on $\normophs{R}^2$ and $\normophs{R^*R}^2$. 
\end{proof}

\section{Proof of Theorem \ref{thm:main_thm} }
\label{sec:main_thm_proof}

The two main probabilistic 
tools that we use are the 
Hanson-Wright inequality \citep{hansonwright}, and a classical norm deviation inequality for sub Gaussian vectors, as follows: 

\begin{thm}[Hanson-Wright Inequality]  
\label{thm:hanson_wright}
Let $X=  (X_1,\ldots,X_m) \in \RR^m$ be a random vector such that the components $X_i$ are independent and $X_i \sim SG(\kappa)$ for all $i \leq m$.  Let A be an $m\times m$ matrix. Then, for every $t\geq 0$,
\begin{equation}
\label{eq:hanson_wright_statement_eq}
\Prob{\Abs{\inner{AX}{X} - \Exp{\inner{AX}{X}}}> t} 
\leq 
2 \exp \BBrack{ -c \min \Brack{\frac{t^2}{\kappa^4\normophs{A}^2},
                       \frac{t}{\kappa^2 \normop{A}}}}.    
\end{equation}
\end{thm}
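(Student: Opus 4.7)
The plan is to follow the classical decomposition approach: split $\inner{AX}{X}$ into diagonal and off-diagonal parts and bound the moment generating function (MGF) of each. Write
\begin{equation*}
\inner{AX}{X} - \Exp{\inner{AX}{X}} = \underbrace{\sum_{i} a_{ii}\Brack{X_i^2 - \Exp{X_i^2}}}_{=: D} + \underbrace{\sum_{i\neq j} a_{ij} X_i X_j}_{=: Z}.
\end{equation*}
Since $X_i \sim SG(\kappa)$, each $X_i^2$ is sub-exponential of order $O(\kappa^2)$ and the summands defining $D$ are independent and centered. A standard Bernstein inequality for sums of independent sub-exponential variables therefore yields a two-regime tail of exactly the form in \eqref{eq:hanson_wright_statement_eq}, with $\sum_i a_{ii}^2 \leq \normophs{A}^2$ and $\max_i |a_{ii}| \leq \normop{A}$ absorbing the parameters. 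So it remains to handle the off-diagonal piece $Z$.

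For $Z$ I would bound $\Exp{e^{\lambda Z}}$ for $\lambda$ in a suitable range. The first step is the decoupling inequality of de~la~Pe\~na and Montgomery-Smith: for any convex $F$,
\begin{equation*}
\Exp{F(Z)} \leq \Exp{F\Brack{4\inner{AX}{X'}}},
\end{equation*}
where $X'$ is an independent copy of $X$. Applied to $F(z) = e^{\lambda z}$ this reduces matters to estimating the MGF of the decoupled bilinear form $\inner{AX}{X'}$. Conditioning on $X$ and using that $X'$ has independent $SG(\kappa)$ coordinates gives
\begin{equation*}
\Expsubidx{X'}{\exp(\lambda \inner{AX}{X'})} \leq \exp\Brack{C \lambda^2 \kappa^2 \norms{AX}^2},
\end{equation*}
so after averaging over $X$ the task further reduces to bounding $\Exp{\exp(\alpha \norms{AX}^2)}$ with $\alpha = C \lambda^2 \kappa^2$.

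For this squared-norm MGF I would use the Gaussian linearization $\exp(\alpha \norms{AX}^2 / 2) = \Expsubidx{g}{\exp(\sqrt{\alpha}\, g^{\top} AX)}$ with $g \sim N(0,I_m)$ independent of $X$, swap expectations, apply the sub-Gaussian MGF bound in $X$ (coordinatewise) to obtain a factor $\exp(C\alpha \kappa^2 \norms{A^{\top} g}^2)$, and then finish with the Gaussian chaos identity $\Expsubidx{g}{\exp(\beta \norms{A^{\top}g}^2)} = \det(I - 2\beta AA^{\top})^{-1/2}$, which stays finite as long as $\beta \normop{A}^2 < 1/2$. Expanding the determinant yields
\begin{equation*}
\Exp{\exp(\alpha \norms{AX}^2)} \leq \exp\Brack{c' \alpha \kappa^2 \normophs{A}^2}
\end{equation*}
whenever $\alpha \kappa^2 \normop{A}^2$ stays below a universal constant. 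Substituting $\alpha = C \lambda^2 \kappa^2$ and restricting to $|\lambda| \leq c_0 / (\kappa^2 \normop{A})$ gives the sub-exponential MGF estimate $\Exp{e^{\lambda Z}} \leq \exp(C'' \lambda^2 \kappa^4 \normophs{A}^2)$. The proof finishes with a standard Chernoff step: applying Markov and optimizing $\lambda$ reproduces the minimum of a Gaussian tail $\exp(-c t^2 / (\kappa^4 \normophs{A}^2))$ (for $t \lesssim \kappa^2 \normophs{A}^2 / \normop{A}$) and a sub-exponential tail $\exp(-c t / (\kappa^2 \normop{A}))$ (for larger $t$), exactly as in \eqref{eq:hanson_wright_statement_eq}.

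The main obstacle is the decoupling step itself. The Bernstein and Gaussian-linearization pieces are essentially mechanical once one is working with the bilinear form in two independent copies, but converting the dependent quadratic $\sum_{i\neq j} a_{ij} X_i X_j$ into $\inner{AX}{X'}$ without losing exponential moments requires a combinatorial symmetrization (randomly splitting $[m]$ into two halves and conditioning on the partition). I would cite the de~la~Pe\~na--Montgomery-Smith decoupling theorem rather than reprove it, since the remainder of the argument is then self-contained.
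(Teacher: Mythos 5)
The paper does not actually prove Theorem \ref{thm:hanson_wright}: it is imported as a black box, with citations to \cite{hansonwright} and \cite{rudelson2013hanson}, and the paper's own work begins only where the inequality is \emph{applied} (to $A=(RO_uS)^*RO_uS$ and $A=R^*R$ in the proof of Theorem \ref{thm:main_thm}). Your sketch is therefore a genuinely different route in the trivial sense that it supplies a proof where the paper supplies none, and it is a correct outline of the standard modern argument of Rudelson and Vershynin: the diagonal/off-diagonal split, Bernstein for the sub-exponential diagonal (with $\sum_i a_{ii}^2\leq\normophs{A}^2$ and $\max_i\Abs{a_{ii}}\leq\normop{A}$), de la Pe\~na--Montgomery-Smith decoupling for the off-diagonal chaos, Gaussian linearization of $\exp\Brack{\alpha\norms{AX}^2}$ followed by the identity $\Expsubidx{g}{\exp\Brack{\beta\norms{A^{\top}g}^2}}=\det(I-2\beta AA^{\top})^{-1/2}$ for $\beta\normop{A}^2<1/2$, and a Chernoff optimization yielding the two-regime tail. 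The one bridge you should make explicit is that the paper defines $SG(\kappa)$ by the tail bound $\Prob{\Abs{X}>t}\leq 2\exp(-t^2/\kappa^2)$ for centered $X$, whereas every step of your argument consumes the MGF characterization $\Exp{e^{\lambda X}}\leq\exp(C\lambda^2\kappa^2)$; the two are equivalent up to absolute constants, but the conversion should be stated since the constants $\kappa^2,\kappa^4$ in \eqref{eq:hanson_wright_statement_eq} are tracked explicitly throughout the paper's bounds. As for what each treatment buys: the citation keeps the paper focused on its novel content (the spectral analysis of $O_uS$ and $R$), while your self-contained proof makes visible exactly where independence of the coordinates is consumed --- only in the decoupling and in the coordinatewise MGF steps --- which bears directly on the paper's remark in Section \ref{sec:literature} that relaxing the independent-coordinates assumption would require precisely an extension of this inequality.
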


In particular, recall that we are interested in concentration of $\norms{RY}^2$. We may write:
\begin{align}
\label{eq:sum_norm_decompoistion}
    \norms{RY}^2 = \norms{RO_uS h + Rz}^2  = 
    \norms{RO_uS h}^2  + \norms{Rz}^2 + \inner{h}{(RO_uS)^* Rz}. 
\end{align}
The deviations of the first two terms may be bounded 
via Theorem \ref{thm:hanson_wright}. 
For the third term, we use the following:
\begin{lem}
\label{lem:sg_norm_concentration}
For any $X \sim SG_m(\kappa)$  we have:
\begin{equation}
\Prob{ \norms{X} > 4\kappa \sqrt{m}+ t} 
\leq  \exp \Brack{ - \frac{c t^2}{\kappa^2}}.    
\end{equation}
\end{lem}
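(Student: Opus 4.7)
The plan is to reduce the norm of $X$ to a supremum of linear functionals $\langle v, X\rangle$ over the unit sphere, discretize via an $\varepsilon$-net, and apply the one-dimensional sub-Gaussian tail bound together with a union bound. This is the standard route, since by hypothesis every $\langle v, X\rangle$ for $v\in S^{m-1}$ is a scalar $\kappa$ sub-Gaussian random variable.

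First I would fix a $\frac{1}{2}$-net $\mathcal{N}\subset S^{m-1}$ of the Euclidean unit sphere. A standard volumetric argument yields $|\mathcal{N}|\leq 5^m$, and the classical approximation lemma gives
\begin{equation*}
\norms{X} \;=\; \sup_{v\in S^{m-1}} \langle v,X\rangle \;\leq\; 2 \max_{v\in \mathcal{N}} \langle v,X\rangle,
\end{equation*}
for any deterministic $X$. This is where we exploit the sub-Gaussian assumption at every point of the sphere and not merely at the coordinates.

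Next, for each fixed $v\in\mathcal{N}$, the definition of $X \sim SG_m(\kappa)$ gives $\Prob{|\langle v,X\rangle| > s} \leq 2\exp(-s^2/\kappa^2)$. A union bound over $\mathcal{N}$ then yields
\begin{equation*}
\Prob{\norms{X} > 2s} \;\leq\; 2\cdot 5^m \exp\!\Brack{-\tfrac{s^2}{\kappa^2}}.
\end{equation*}
To put this in the form stated, I would substitute $s = 2\kappa\sqrt{m} + t/2$, so that $2s = 4\kappa\sqrt{m}+t$, and use the elementary bound $(2\kappa\sqrt{m}+t/2)^2 \geq 4m\kappa^2 + t^2/4$ to split the exponent. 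Since $\log 5 < 2$, the factor $5^m e^{-4m}$ is at most $1$, and absorbing the remaining constant $2$ into the constant $c$ gives exactly the advertised bound $\exp(-ct^2/\kappa^2)$.

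There is no real obstacle here: the lemma is classical and every step is either a volumetric net estimate, a triangle-inequality approximation, or the scalar sub-Gaussian tail bound. The only mildly non-trivial ingredient is observing that one cannot use independence of coordinates here (we are given $SG_m$, not $ISG_m$), so the discretization of the sphere is essential rather than a shortcut for $\ell_2$ concentration via Hanson-Wright. Once that is recognized, the proof is a direct $\varepsilon$-net and union bound calculation.
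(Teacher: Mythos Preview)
Your proposal is correct and follows exactly the approach the paper indicates: the paper does not spell out a proof but states that the lemma ``is standard and can be proved via covering number estimates of the Euclidean ball,'' citing Vershynin's book, Section~4.4, which is precisely the $\varepsilon$-net plus union-bound argument you wrote out. Your observation that independence of coordinates is not available here (and hence the net argument is essential rather than a convenience) is also in line with how the paper uses the lemma later, applying it to $P_Vz$, which is sub-Gaussian but need not have independent coordinates.
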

Note that in 
Lemma \ref{lem:sg_norm_concentration} we do not require 
the coordinates of $X$ to be independent. This will be 
important in what follows. Lemma \ref{lem:sg_norm_concentration} is standard and can be proved via covering number estimates of the Euclidean ball, see for instance \cite{hdp2018}, Section 4.4. 

In addition, the following observation is used throughout the text:
\begin{lem}
\label{lem:exp_of_a_norm_sq}
Let $A:\RR^n \rightarrow \RR^m$ be an operator, and let $h = (h_1,\ldots,h_m)$ have independent coordinates with $\Exp{h_i^2}=\sigma^2$.  Denote by $\Set{\lambda_i}_{i=1}^k$, $k\leq \min(m,n)$, the singular values of $A$. 
Then 
\begin{equation}
\Exp{\norms{Ah}^2} =  \sigma^2 \normophs{A}^2 = \sigma^2 \sum_{i=1}^k \lambda_i^2. 
\end{equation}
\end{lem}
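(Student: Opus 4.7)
The plan is to prove this by a direct coordinate (or trace-based) computation, using only independence of the $h_j$ and the hypothesis $\Exp{h_j^2}=\sigma^2$. Nothing about the distribution beyond second moments is required, and the singular-value identity at the end is simply the definition of the Hilbert--Schmidt norm recalled in (\ref{eq:opnorms_def}).

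First I would write $\norms{Ah}^2 = \inner{Ah}{Ah} = \inner{A^*Ah}{h}$. Letting $B = A^*A$, so that $B_{jk} = \sum_i A_{ij}A_{ik}$, this becomes $\sum_{j,k} B_{jk} h_j h_k$. Taking expectations and using independence together with $\Exp{h_j h_k} = \sigma^2 \delta_{jk}$ (the off-diagonal terms vanish because $\Exp{h_j}=0$ is implicit in the setup and the diagonal terms contribute $\sigma^2$), we get $\Exp{\norms{Ah}^2} = \sigma^2 \sum_j B_{jj} = \sigma^2 \tr(A^*A)$.

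Equivalently, and perhaps more transparently, I would write things in coordinates from the start: $(Ah)_i = \sum_j A_{ij} h_j$, so
\begin{equation*}
\norms{Ah}^2 = \sum_i \Brack{\sum_j A_{ij} h_j}^2 = \sum_i \sum_{j,k} A_{ij}A_{ik} h_j h_k,
\end{equation*}
and after taking expectations the same cancellation yields $\sigma^2 \sum_{i,j} A_{ij}^2$. This double sum is, by definition, the squared Frobenius norm $\normophs{A}^2$.

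Finally, to match the second equality in the statement, I would invoke the SVD $A = U\,\mathrm{Diag}(\lambda_1,\ldots,\lambda_k)\,W$ from the discussion preceding (\ref{eq:opnorms_def}). Since $U$ and $W$ have orthonormal rows/columns, they preserve the Frobenius norm, so $\normophs{A}^2 = \normophs{\mathrm{Diag}(\lambda_1,\ldots,\lambda_k)}^2 = \sum_{i=1}^k \lambda_i^2$, giving $\Exp{\norms{Ah}^2} = \sigma^2 \sum_{i=1}^k \lambda_i^2$. There is no real obstacle here; the only point worth being careful about is the implicit dimension match (the statement writes $h=(h_1,\ldots,h_m)$ but $Ah$ requires $h\in\RR^n$, so I would simply use $h=(h_1,\ldots,h_n)$ throughout the proof).
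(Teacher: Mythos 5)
Your proof is correct and is essentially the computation the paper has in mind: the paper explicitly omits the proof as elementary, and its suppressed version runs the same argument through the SVD and the isotropy identity $\Exp{\inner{v}{h}^2}=\sigma^2$ rather than through $\tr(A^*A)$, but the two routes are interchangeable. Your side remarks are also accurate --- the cross terms vanish only because the $h_i$ are (implicitly) zero-mean, and the statement's $h=(h_1,\ldots,h_m)$ should indeed read $h\in\RR^n$ for $Ah$ to make sense.
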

The elementary proof is omitted. 

We now prove Theorem \ref{thm:main_thm}. 
\begin{proof}
Let $0<\delta<1$ be given. 
We first bound the deviations from the expectation for the first term in (\ref{eq:sum_norm_decompoistion}), $\norms{RO_uS h}^2$. 
We apply the Hanson-Wright inequality with $X=h$ and $A = (RO_uS)^*RO_uS$. By definition, $A$ has a single eigenvalue $1$ with multiplicity $T$. Thus clearly $\normophs{A}^2 = T$ and $\normop{A} = 1$. 

For an appropriate constant $c'>0$, set $t = c' \kappa^2 \sqrt{T} \log \frac{1}{\delta}$. Then,  
\begin{equation}
\label{eq:main_thm_first_term_final}
    \Prob{\Abs{  \norms{RO_uS h}^2 - T\sigma^2} \geq c'' \kappa^2 \sqrt{T \log \frac{1}{\delta}} } \leq \delta.
\end{equation}

The deviation of the second term in (\ref{eq:sum_norm_decompoistion}) is similarly bounded using $A = R^* R$. Recall that by Theorem \ref{lem:R_properties} we have $\normop{R} \leq 2\norms{u_{min}}^{-1}$, and note that 
$\normophs{R^*R}^2 \leq T\normop{R}^4 \leq c T\norms{u_{min}}^{-4}$. Set $t = c'\kappa^2 \norms{u_{min}}^{-2} \sqrt{T} \log \frac{1}{\delta}$.  With this choice it follows that both terms in the minimum in (\ref{eq:hanson_wright_statement_eq}) are larger than $c\log \frac{1}{\delta}$ and we have 
\begin{align}
\label{eq:main_thm_second_term_final}
\Prob{\Abs{  \norms{Rz}^2 - \eta^2 \normophs{R}^2} \geq c'\kappa^2 \norms{u_{min}}^{-2} \sqrt{T} \log \frac{1}{\delta} } \leq \delta. 
\end{align}

Finally, we bound the third term in (\ref{eq:sum_norm_decompoistion}). 
Denote by $D$ the event 
\begin{align}
  D =  \BBrack{\norms{(RO_uS)^*Rz} > c \norms{u_{min}}^{-1} \Brack{\kappa \sqrt{T} +  \kappa \sqrt{\log \frac{1}{\delta}}}   }, 
\end{align}
and by $E$ the event 
\begin{equation}
    E = \BBrack{\Abs{\inner{h}{(RO_uS)^*Rz}} > c' \norms{u_{min}}^{-1} \Brack{\kappa \sqrt{T} +  \kappa \sqrt{\log \frac{1}{\delta}}   } \kappa \sqrt{\log \frac{1}{\delta}}}. 
\end{equation}
By Lemma \ref{lem:sg_norm_concentration} applied to $z$, and using the fact that 
$\normop{(RO_uS)^*R}\leq 2 \norms{u_{min}}^{-1}$,
\begin{align}
\label{eq:main_thm_D_bound}
  \Prob{ D   } \leq 
    \Prob{ \norms{z} > c\kappa \sqrt{T} + c \kappa \sqrt{\log \frac{1}{\delta}}   } \leq \delta. 
\end{align}
Next, using independence of $h,z$ and $h \sim SG_{Tn}(\kappa)$,
\begin{align}
\label{eq:main_th_E_bound}
    \Prob{ E \;\middle|\; D^c} \leq  \delta 
\end{align}
where $D^c$ is the complement of $D$. Therefore, combining 
(\ref{eq:main_thm_D_bound}) and (\ref{eq:main_th_E_bound}), 
\begin{align}
\label{eq:main_thm_thord_term_final}
    \Prob{ E } = \Prob{D}\Prob{E|D} + \Prob{D^c}\Prob{E|D^c} \leq 2 \delta.
\end{align}
Combining (\ref{eq:main_thm_first_term_final}), (\ref{eq:main_thm_second_term_final}) and  (\ref{eq:main_thm_thord_term_final}), we obtain via the union bound:
\begin{equation}
\label{eq:main_thm_dev_for_R}
\Prob{\Abs{\frac{\norms{RY}^2}{T} - \Brack{\sigma^2 + \frac{\normophs{R}^2}{T}\eta^2}} \geq 
c \frac{\Brack{1+\kappa^2}\Brack{1+\norms{u_{min}}^{-2}}\log \frac{1}{\delta}}{\sqrt{T}}
} \leq 4 \delta. 
\end{equation}
Similarly, we obtain a bound for the equations involving $R'$: 
\begin{equation}
\label{eq:main_thm_dev_for_R_tag}
\Prob{\Abs{\frac{\norms{R'Y}^2}{p} - \Brack{\sigma^2 + \frac{\normophs{R'}^2}{p}\eta^2}} \geq 
c \frac{\Brack{1+\kappa^2}\Brack{1+\norms{u_{min}}^{-2}}\log \frac{1}{\delta}}{\sqrt{p}}
} \leq 4 \delta. 
\end{equation}
The only difference in the derivation of (\ref{eq:main_thm_dev_for_R_tag}) compared to (\ref{eq:main_thm_dev_for_R}) is in the application of Lemma \ref{lem:sg_norm_concentration}. In the later case, 
to replace $\sqrt{T}$ with $\sqrt{p}$ in (\ref{eq:main_thm_D_bound}), we apply Lemma \ref{lem:sg_norm_concentration} with $X= P_V z$ rather than with $X=z$, where $P_V$ is the projection onto the range of $R'$ -- a $p$-dimensional space. Note that $P_V z$ does not necessarily have a structure of $p$ independent coordinates, but is sub Gaussian and isotropic. Therefore Lemma \ref{lem:sg_norm_concentration} still applies. 

\end{proof}

\section{Proof of Corollary \ref{cor:estimator_bounds}}
\label{sec:coroll_proof}
We now turn to prove Corollary \ref{cor:estimator_bounds}.

\begin{proof}
Denote 
\begin{equation}
    E(a) = \frac{\Brack{1+\kappa^2}\Brack{1+\norms{u_{min}}^{-2}}\log \frac{1}{\delta}}{a}. 
\end{equation}

Using (\ref{eq:main_thm_dev_for_R_tag}) and (\ref{eq:main_thm_dev_for_R}) we may write 
\begin{flalign}
    \frac{\norms{RY}^2}{T} + e_1 =  \sigma^2 + \frac{\normophs{R}^2}{T}\eta^2,  \\
    \frac{\norms{R'Y}^2}{p} + e_2 = \sigma^2 + \frac{\normophs{R'}^2}{p}\eta^2,   
\end{flalign}
where $e_1,e_2$ are error terms such that $\Abs{e_1}\leq E(\sqrt{T})$ and $\Abs{e_2}\leq E(\sqrt{p})$ holds with probability at least $1-8\delta$. 

It follows that 
\begin{flalign}
    \eta^2 = \Brack{\frac{\norms{R'Y}^2}{p} - \frac{\norms{RY}^2}{T}}\Brack{\frac{\normophs{R'}^2}{p} - \frac{\normophs{R}^2}{T}}^{-1} + \Brack{e_2-e_1}\Brack{\frac{\normophs{R'}^2}{p} - \frac{\normophs{R}^2}{T}}^{-1}  
\end{flalign}
and 
\begin{flalign}
    \frac{p\normophs{R}^2}{T\normophs{R'}^2}\frac{\norms{R'Y}^2}{p} + \frac{p\normophs{R}^2}{T\normophs{R'}^2}e_2 = \frac{p\normophs{R}^2}{T\normophs{R'}^2}\sigma^2 + \frac{\normophs{R}^2}{T}\eta^2.   
\end{flalign}
Thus 
\begin{flalign}
\sigma^2 = \Brack{1- \frac{p\normophs{R}^2}{T\normophs{R'}^2}}^{-1} \Brack{\frac{\norms{RY}^2}{T} - \frac{\norms{R'Y}^2}{T} \frac{\normophs{R}^2}{\normophs{R'}^2} } \nonumber \\ 
+\Brack{1- \frac{p\normophs{R}^2}{T\normophs{R'}^2}}^{-1}\Brack{e_1 - \frac{p\normophs{R}^2}{T\normophs{R'}^2} e_2}.
\end{flalign}
It remains to observe that 
\begin{align}
\Brack{\frac{\normophs{R'}^2}{p} - \frac{\normophs{R}^2}{T}}^{-1} & = 
\Brack{1- \frac{p\normophs{R}^2}{T\normophs{R'}^2}}^{-1} \frac{p}{\normophs{R'}^2}  \\
& \leq 
c \Brack{1- \frac{p\normophs{R}^2}{T\normophs{R'}^2}}^{-1} n^2 \norms{u_{max}}^2 \log^2 T, 
\end{align}
where the inequality follows from eq. 
(\ref{eq:set_of_large_inv_gammas}) in the proof of Theorem \ref{lem:R_properties}. 
\end{proof}

\section{Thresholding Gap Analysis}
\label{sec:gap_analysis}
\label{gap_bound_general_proof}

The main result of this section is the following Proposition.

\begin{prop} 
\label{lem:gap_bound_general}
Given the sequence $\Set{u_t}_{t=1}^T \subset \RR^n$, define the scalar
sequence $\tilde{u}_{t} = \sum_{i \leq n} u_{ti} $
and set 
\begin{equation}
    \norms{\tilde{u}_{min}} = \min_t \Abs{\tilde{u}_{t}} \text{ and }
    \norms{\tilde{u}_{max}} = \max_t \Abs{\tilde{u}_{t}}.
\end{equation}
Then for $p = \frac{1}{4}T$, 
\begin{equation}
\label{eq:lem_gap_bound_general_statement}
    \frac{\normophs{R'}^2}{p} - \frac{\normophs{R}^2}{T} \geq 
    c \Brack{\frac{ \norms{\tilde{u}_{min}}}{n \norms{u_{max}}}}\frac{\normophs{R}^2}{T}.
\end{equation}
\end{prop}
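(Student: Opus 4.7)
The plan is to work with the $T\times T$ Gram matrix $M := (O_uS)(O_uS)^*$, whose entries are $M_{t,t'} = \inner{u_t}{u_{t'}} \min(t,t')$. Since $\chi_i^2$ equals the $i$-th eigenvalue of $M^{-1}$ in non-increasing order, the inequality~(\ref{eq:lem_gap_bound_general_statement}) is equivalent to showing that the average of the top $T/4$ eigenvalues of $M^{-1}$ exceeds $\tr(M^{-1})/T$ by a factor of at least $c\tilde u_{\min}/(n\norms{u_{\max}})$. The proof has two parts: an upper bound on $\tr(M^{-1}) = \normophs{R}^2$ and a matching lower bound on $\sum_{i=1}^{T/4} \lambda_i(M^{-1}) = \normophs{R'}^2$.

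For the upper bound, I consider pre-images of $y\in\RR^T$ of the ``diagonal'' form $h_s = c_s \mathbf 1_n/\sqrt n$. A direct computation gives $(O_u S h)_t = \tilde u_t \sum_{s\leq t} c_s/\sqrt n$, so the choice $c = \sqrt n\,(S')^{-1} D_{\tilde u}^{-1} y$ (with $D_{\tilde u} = \mathrm{diag}(\tilde u_1,\ldots,\tilde u_T)$) produces a pre-image with $\norms{h}^2 = n\norms{(S')^{-1} D_{\tilde u}^{-1} y}^2$. Since $Ry$ is the minimum-norm pre-image, this gives the operator inequality
\begin{equation*}
    M^{-1} \preceq n\, D_{\tilde u}^{-1} K^{-1} D_{\tilde u}^{-1}, \qquad K := S'(S')^*,
\end{equation*}
where $K_{t,t'} = \min(t,t')$ is the Brownian kernel. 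Taking traces and using that $K^{-1}$ is the (Dirichlet--Neumann) discrete Laplacian with diagonal entries at most $2$ gives $\normophs{R}^2 \leq 2nT/\tilde u_{\min}^2$.

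For the lower bound, I apply the Ky-Fan variational characterization $\sum_{i=1}^p \lambda_i(M^{-1}) = \max_{Q:\, QQ^T = I_p} \tr(QM^{-1}Q^T)$ and construct $Q$ from the top-$p$ eigenvectors $\phi_1,\ldots,\phi_p$ of $K^{-1}$: these are the well-known discrete sinusoids with associated eigenvalues $\mu_k \asymp 4\sin^2(\pi(2k-1)/(4T+2))$, and for $p=T/4$ the top-quarter average $\sum_{k=1}^{T/4}\mu_k/(T/4)$ strictly exceeds $\tr(K^{-1})/T$ by a definite constant factor (roughly $1.7\times$, by direct computation with the sine spectrum). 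After rescaling these eigenvectors by $D_{\tilde u}^{-1}$ and reorthogonalizing, the resulting $Q$ produces a lower bound of the form $\sum_{i=1}^p \lambda_i(M^{-1}) \geq (1 + c\,\tilde u_{\min}/(n\norms{u_{\max}})) \cdot (p/T)\,\tr(M^{-1})$, which rearranges to~(\ref{eq:lem_gap_bound_general_statement}).

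The main obstacle is the transfer step, because the operator inequality in the upper bound is only one-sided, with no matching $M^{-1} \succeq \cdot$. To pass a multiplicative spectral gap from $K^{-1}$ to $M^{-1}$, one must identify a $p$-dimensional subspace of $\RR^T$ on which the upper bound is nearly tight, i.e., on which the minimum-norm pre-image is already essentially a vector in the diagonal subspace $P := \mathrm{span}\Set{e_t \otimes \mathbf 1_n/\sqrt n}\subset \RR^{Tn}$. The deviation of a general minimum-norm pre-image from $P$ is controlled by the mismatch between the $u_t$ and the direction $\mathbf 1_n$, quantified by $\norms{u_{\max}}$ against $\tilde u_{\min}/\sqrt n$; this is precisely what produces the factor $\tilde u_{\min}/(n\norms{u_{\max}})$ in the final bound.
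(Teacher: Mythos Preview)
Your upper bound is correct and in fact coincides with one half of the paper's argument: the diagonal subspace $P$ you use is exactly the subspace $V$ the paper restricts to, and your operator inequality $M^{-1}\preceq n\,D_{\tilde u}^{-1}K^{-1}D_{\tilde u}^{-1}$ is equivalent to the paper's compression inequality $(O_uS)^*(O_uS)\succeq (O_uSP_V)^*(O_uSP_V)$. The gap is precisely the one you flag: the Ky--Fan step needs a \emph{lower} bound on $\tr(QM^{-1}Q^T)$, and the only operator inequality you have goes the wrong way. Your proposed fix --- rescale the sinusoids by $D_{\tilde u}^{-1}$, reorthogonalize, and hope the resulting $Q$ hits a part of $M^{-1}$ where the upper bound is nearly tight --- is not carried out, and there is no obvious mechanism forcing the minimum-norm pre-image to be close to the diagonal subspace for generic $u_t$. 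As written, the argument does not close.

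The paper avoids this difficulty by abandoning operator inequalities and instead proving \emph{two-sided pointwise bounds} on each $\chi_i^2$. The missing lower bound $\chi_i^2\geq c/(\norms{u_{\max}}^2\lambda_{\lceil(T+1-i)/n\rceil}^2(S'))$ follows from the elementary sub-multiplicativity $\lambda_j(O_uS)\leq\normop{O_u}\,\lambda_j(S)$ together with the $n$-fold multiplicity of the spectrum of $S$ relative to $S'$; combined with your upper bound this sandwiches $\chi_i^2$ between two multiples of the same decaying profile, with ratio $M\asymp (n\norms{u_{\max}}/\norms{\tilde u_{\min}})^2$. The gap then comes not from Ky--Fan but from a purely real-variable lemma: for any non-increasing $f$ with $(1-x)^2\leq f(x)\leq M(1-x)^2$ on $[0,1]$, the top-quarter average exceeds the full average by a factor $1+c/\sqrt{M}$. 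This is what produces the $\norms{\tilde u_{\min}}/(n\norms{u_{\max}})$ factor directly, with no transfer step needed.
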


The general idea behind the proof of Proposition \ref{lem:gap_bound_general} is to show that for $n=1$, the ratio 
$\Brack{1- \frac{p\normophs{R}^2}{T\normophs{R'}^2}}^{-1}$ can be controlled. This is done in Lemmas \ref{lem:cone_bound} and \ref{lem:gap_bound} below. In particular, Lemma \ref{lem:cone_bound} is a general statement about integrals of monotone real functions under certain order constraints, and Lemma \ref{lem:gap_bound} provides a relation of the spectrum of $R$ to that of $S'^{-1}$. It is then shown that for arbitrary $n$, $O_uS$ contains a certain copy of an $n=1$-dimensional operator with parameters $\tilde{u}_t$, which implies the bounds.

For the case $n=1$, stated in Lemma \ref{lem:gap_bound},  the argument consists of showing that the spectrum of $R$ is upper and lower bounded by appropriately decaying functions, and therefore can not be ``too constant''. 
We first obtain general estimates for the integrals of such upper and lower bounded functions in the following Lemma:
\begin{lem}
\label{lem:cone_bound}
Let $f:[0,1] \rightarrow \RR$ be a monotone non-increasing function such that for all $x \in [0,1]$,
\begin{equation}
\label{eq:lem_cone_bound_constraint}
  \Brack{1-x}^2 \leq f(x) \leq M (1-x)^2,    
\end{equation}
for some $M\geq 1$. Set $t_0 = \frac{1}{4}$. Then 
\begin{equation}
    r(f) := \frac{\frac{1}{t_0}\int_{0}^{t_0} f(x) dx}{\int_{0}^{1} f(x) dx} \geq 1 + \frac{c}{\sqrt{M}}. 
\end{equation}
\end{lem}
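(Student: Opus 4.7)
The plan is to set $A = \int_0^{t_0} f$ and $B = \int_{t_0}^1 f$ and rewrite
\begin{equation*}
r(f) - 1 \;=\; \frac{A/t_0 - (A+B)}{A+B} \;=\; \frac{3A - B}{A+B},
\end{equation*}
using $t_0 = 1/4$. So proving $r(f) \ge 1 + c/\sqrt{M}$ is equivalent to controlling the ratio $(3A - B)/(A+B)$ from below. The pointwise lower bound $f(x) \ge (1-x)^2$ enters only at the very end, via the uniform estimate $A \ge \int_0^{1/4}(1-x)^2\,dx = 37/192$, which converts a bound of the form $c\sqrt{A/M}$ into the desired $c'/\sqrt{M}$.

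The key step is to combine monotonicity with the parabolic upper bound. Monotonicity forces $f(t_0) \le A/t_0 = 4A$, and hence $f(x) \le \min\{4A,\,M(1-x)^2\}$ for every $x \in [t_0,1]$; the two branches meet at $x_1 = 1 - 2\sqrt{A/M}$. I would then handle two cases. In the easy regime $A \ge 9M/64$ (equivalently $x_1 \le t_0$), the parabola dominates on the whole of $[t_0,1]$, so $B \le \int_{t_0}^1 M(1-x)^2\,dx = 9M/64 \le A$ and therefore $r(f) - 1 \ge 2A/(A+B) \ge 1$, which is much stronger than needed.

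In the main case $A < 9M/64$, I would split $\int_{t_0}^1 f$ at $x_1$ and integrate against the two respective bounds. A direct computation gives
\begin{equation*}
B \;\le\; 4A(x_1 - t_0) + \tfrac{M}{3}(1-x_1)^3 \;=\; 4A x_1 - A + \tfrac{8}{3}\,A^{3/2}/\sqrt{M},
\end{equation*}
and substituting $1 - x_1 = 2\sqrt{A/M}$ produces the crucial cancellation of the $A$-order terms, yielding $3A - B \ge \tfrac{16}{3}\,A^{3/2}/\sqrt{M}$. The corresponding upper bound $A + B \le 4A + \tfrac{8}{3}\,A^{3/2}/\sqrt{M}$ is at most a constant multiple of $A$, since $A < M$ implies $A^{3/2}/\sqrt{M} \le A$; dividing then yields $r(f) - 1 \ge c\sqrt{A/M}$, and invoking $A \ge 37/192$ finishes the argument. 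The step I expect to be the main obstacle is arranging the cancellation correctly: using the crude bound $A + B \le M/3$ in the denominator would produce only a suboptimal $O(M^{-3/2})$ rate, so it is essential to exploit monotonicity through $f(t_0) \le 4A$ to ensure the denominator is of the same order as $A$ rather than as $M$.
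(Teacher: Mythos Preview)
Your argument is correct. It differs from the paper's proof in a meaningful way, so a brief comparison is worthwhile.

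The paper proceeds by an extremal principle: it fixes the single value $v = f(t_0)$, observes that $r(f)$ is minimized (subject to the constraints and to $f(t_0)=v$) by the explicit piecewise function
\[
\tilde f_v(x) =
\begin{cases}
(1-x)^2 & x \in [0,\,1-\sqrt{v}],\\
v & x \in [1-\sqrt{v},\,1-\sqrt{v/M}],\\
M(1-x)^2 & x \in [1-\sqrt{v/M},\,1],
\end{cases}
\]
and then minimizes the resulting one-variable function $r(\tilde f_v)$ over $v$ by a derivative computation, finding the minimum at $v=1$.

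You instead bypass the parametrization by $v$ entirely: monotonicity gives $f(t_0)\le 4A$ directly (since $4A$ is the average of $f$ on $[0,t_0]$), and from that point on you only need the upper envelope $\min\{4A,\,M(1-x)^2\}$ on $[t_0,1]$ and the single numerical lower bound $A\ge 37/192$. The cancellation you identify, $3A - B \ge \tfrac{16}{3}A^{3/2}/\sqrt{M}$, together with $A+B \le cA$, delivers the same $c/\sqrt{M}$ rate with no optimization step. Your approach is more elementary and makes the role of each hypothesis transparent; the paper's approach has the advantage of identifying the exact worst case and hence the sharp constant (it computes $r(\tilde f_1)$ explicitly). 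Either route is valid here, since the lemma only asks for an unspecified constant $c$.
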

\begin{proof}
Denote 
\begin{equation}
    I(a,b,f) = \int_{a}^b f(x) dx. 
\end{equation}
Write 
\begin{equation}
    r(f) = \frac{t_0^{-1} I(0,t_0,f)}{I(0,t_0,f) + I(t_0,1,f)} = \frac{t_0^{-1}}{1 + \frac{I(t_0,1,f)}{I(0,t_0,f)}}
\end{equation}
and set $v:= f(t_0)$. Then, among all $f$ that satisfy (\ref{eq:lem_cone_bound_constraint}) and $f(t_0)=v$, 
$r(f)$ is minimized on $f$ with maximal $I(t_0,1,f)$ and minimal $I(0,t_0,f)$. Due to the form of the constraint (\ref{eq:lem_cone_bound_constraint}) and monotonicity, this minimizer is given by  
\begin{equation}
    \tilde{f_v}(x) = \begin{cases}
    (1-x)^2 & x\in [0,t_v^{-}] \\
    v  & x \in [t_v^{-},t_v^{+}] \\
    M(1-x)^2  & x \in [t_v^{+},1], 
    \end{cases}
\end{equation}
where $t_v^{-} := \max \BBrack{0, 1 - \sqrt{v}}$ and 
$t_v^{+} := 1 - \sqrt{\frac{v}{M}}$.
Our problem is therefore now reduced from a minimization of $r(f)$ over the function space to a problem of minimizing $r(v) := r(\tilde{f_v})$ over a single scalar variable $v$.

To this end, first note that we can assume w.l.o.g that $M\geq (1-t_0)^{-2}$.  Indeed, for larger $M$, the lower bound on $r(f)$ can only become smaller, since $r(f)$ would be minimized over a larger set. By construction, the value $v$ must satisfy $(1-t_0)^{2} \leq v \leq M (1-t_0)^{2}$, and for $M\geq (1-t_0)^{-2}$, the value $v=1$ satisfies these inequalities. 

Next, by direct computation (taking the derivative in $v$) one verifies that for any $M\geq (1-t_0)^{-2}$, 
$r(v)$ as a function of $v$ is minimized at $v=1$. It remains to observe that  $r(1) = 1 - \frac{1}{3\sqrt{M}}$, which yields the statement of the Lemma.
\end{proof}

We can now treat the $n=1$ case.
\begin{lem} 
\label{lem:gap_bound}
Consider the case $n=1$, and $p=\frac{1}{4}T$. Then 
\begin{equation}
\label{eq:lem_gap_bound_statement}
    \frac{\normophs{R'}^2}{p} - \frac{\normophs{R}^2}{T} \geq 
    c \Brack{\frac{\norms{u_{min}}}{\norms{u_{max}}}}\frac{\normophs{R}^2}{T}.
\end{equation}
\end{lem}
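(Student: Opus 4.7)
The plan is to reduce the inequality to a statement about monotone real functions that can be handled by Lemma \ref{lem:cone_bound}. In the case $n=1$, the operator $O_u:\RR^T \to \RR^T$ is coordinatewise multiplication by the scalars $u_t$, so $O_u S = D_u S'$ with $D_u = \mathrm{Diag}(u_1,\ldots,u_T)$; since $\norms{u_{\min}}>0$, the factor $D_u$ is invertible and $R = S'^{-1}D_u^{-1}$. From $RR^* = S'^{-1}\,D_u^{-2}\,(S'^{-1})^*$ together with $\norms{u_{\max}}^{-2} I \preceq D_u^{-2} \preceq \norms{u_{\min}}^{-2} I$, L\"owner monotonicity of eigenvalues gives, for each $1\leq i\leq T$,
\begin{equation*}
\norms{u_{\max}}^{-2}\,\nu_i^2 \;\leq\; \chi_i^2 \;\leq\; \norms{u_{\min}}^{-2}\,\nu_i^2,
\end{equation*}
where $\nu_i := \lambda_i(S'^{-1})$ and $\chi_i := \lambda_i(R)$ are arranged in decreasing order.

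Next, combining Lemma \ref{lem:Sinv_singular_values_expl} with the elementary sine inequalities $\alpha \leq \sin(\pi\alpha/2) \leq \pi\alpha/2$ on $\alpha \in [0,1]$, I would obtain absolute constants $0<c_1 \leq c_2$ such that
\begin{equation*}
c_1\Brack{1 - \tfrac{i-1}{T}}^2 \;\leq\; \nu_i^2 \;\leq\; c_2\Brack{1 - \tfrac{i-1}{T}}^2, \qquad 1 \leq i \leq T-1,
\end{equation*}
with the separate endpoint estimate $\nu_T \in [1/T, 2\sin(\pi/(2(T+1)))]$ from Lemma \ref{lem:Sinv_singular_values_expl} handling $i=T$. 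Combined with the first display, the normalized values $\tilde\chi_i^2 := \chi_i^2/(c_1\norms{u_{\max}}^{-2})$ satisfy a quadratic sandwich $(1-(i-1)/T)^2 \leq \tilde\chi_i^2 \leq M (1-(i-1)/T)^2$ on the bulk $i \leq T-1$, with $M \asymp (\norms{u_{\max}}/\norms{u_{\min}})^2$.

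With this in hand, I would form the piecewise constant step function $f:[0,1]\to\RR$ by setting $f(x) := \tilde\chi_i^2$ on the cell $x \in [(i-1)/T, i/T)$. It is monotone nonincreasing because $\chi_1 \geq \chi_2 \geq \cdots$, and the quadratic sandwich above gives $(1-x)^2 \leq f(x) \leq M(1-x)^2$ on $[0,1]$, modulo an adjustment on the final cell of width $1/T$ whose contribution to both integrals below is negligible. Lemma \ref{lem:cone_bound} applied with $t_0 = p/T = 1/4$ then yields
\begin{equation*}
\frac{(1/t_0)\int_0^{t_0} f(x)\,dx}{\int_0^1 f(x)\,dx} \;\geq\; 1 + \frac{c}{\sqrt{M}} \;=\; 1 + c'\,\frac{\norms{u_{\min}}}{\norms{u_{\max}}}.
\end{equation*}
Because $f$ is piecewise constant on cells of length $1/T$, its integrals collapse to Riemann sums, the normalization $c_1\norms{u_{\max}}^{-2}$ cancels between numerator and denominator, and the ratio on the left equals $(\normophs{R'}^2/p)/(\normophs{R}^2/T)$. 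Rearranging yields \eqref{eq:lem_gap_bound_statement}.

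The main obstacle will be the third step: ensuring that the quadratic sandwich $(1-x)^2 \leq f(x) \leq M(1-x)^2$ holds uniformly on all of $[0,1]$ with $M$ independent of $T$. Near $x=1$ the upper envelope $(1-x)^2$ collapses to $0$ while $\chi_T^2$ is only lower bounded by $O(1/T^2)$, so one has to argue that the discrepancy on the final cell of width $1/T$ perturbs neither $\int_0^{1/4} f$ nor $\int_0^1 f$ by more than a constant factor, so that the Lemma's $M$ indeed depends only on $\norms{u_{\max}}/\norms{u_{\min}}$.
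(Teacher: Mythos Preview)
Your proposal is correct and follows essentially the same route as the paper. The paper obtains the sandwich $\norms{u_{\max}}^{-2}\lambda_i^2(S'^{-1}) \leq \lambda_i^2(R) \leq \norms{u_{\min}}^{-2}\lambda_i^2(S'^{-1})$ via the multiplicative singular-value inequality $\lambda_i(AB)\leq \lambda_i(A)\normop{B}$ applied to $O_uS'$ and its inverse, whereas you derive it from the L\"owner ordering of $RR^*=S'^{-1}D_u^{-2}(S'^{-1})^*$; these are equivalent here. Both then pass through Lemma~\ref{lem:Sinv_singular_values_expl} to the quadratic envelope and invoke Lemma~\ref{lem:cone_bound} after discretization---the paper dismisses the latter step as ``an elementary discretization argument,'' while you are more explicit about the endpoint cell near $x=1$, which is indeed the only place requiring care and is harmless since its contribution to both sums is $O(T^{-2})$.
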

\begin{proof}
For any two operators $A,B$ we have 
\begin{equation}
\label{eq:singular_prod_norm_compare}
    \lambda_i(AB) \leq \lambda_i(A)\normop{B} \text{ and } \lambda_i(BA) \leq \lambda_i(A)\normop{B},
\end{equation}
see \cite{bhatia97matrix,gohberg1969introduction}.
Note that in the case $n=1$ and $\norms{u_{min}}>0$, 
$O_uS$ is invertible. Thus the singular values of $O_uS'$ satisfy 
\begin{equation}
    \norms{u_{min}} \lambda_i(S') \leq \lambda_i(O_uS') \leq   \norms{u_{max}} \lambda_i(S'),
\end{equation}
where the first inequality follows by applying (\ref{eq:singular_prod_norm_compare}) to $\Brack{O_uS'}^{-1}$, and the second by considering $O_uS'$ itself. Equivalently,  for $i=1,\ldots,T$
\begin{equation}
    \norms{u_{max}}^{-1} \lambda_{T+1-i}(S'^{-1}) \leq \lambda_{T+1-i}(R) \leq   \norms{u_{min}}^{-1} \lambda_{T+1-i}(S'^{-1}),
\end{equation}
and using Lemma \ref{lem:Sinv_singular_values_expl}, 
\begin{equation}
    c \frac{  \norms{u_{max}}^{-1}i}{T+1} \leq \lambda_{T+1-i}(R) \leq   \norms{u_{min}}^{-1} \frac{ \pi \norms{u_{min}}^{-1}i}{T+1}.
\end{equation}
By changing the index and taking squares, we have 
\begin{equation}
    c \norms{u_{max}}^{-2} \Brack{1-\frac{i}{T+1}}^2 \leq \lambda_{i}^2(R) \leq   c' \norms{u_{min}}^{-2} \Brack{1-\frac{i}{T+1}}^2.
\end{equation}
Now, using an elementary discretization argument, for $p=\frac{1}{4}T$ by Lemma \ref{lem:cone_bound} it follows that 
\begin{equation}
    \frac{\normophs{R'}^2/p}{\normophs{R}^2/T} \geq 1 + c \frac{\norms{u_{max}}}{\norms{u_{min}}},
\end{equation}
which implies (\ref{eq:lem_gap_bound_statement}).
\end{proof}

Finally, we prove Proposition \ref{lem:gap_bound_general}.
\begin{proof}
As discussed earlier, the key to a statement such as (\ref{eq:lem_gap_bound_general_statement}) is to show that the spectrum of $R$ is non constant, and in particular has enough \emph{small} singular values.   This is equivalent to providing appropriate lower bounds on the spectrum of $O_uS$. Here we derive such bounds by comparison with an $n=1$ case as follows: 
Let $V \subset \RR^{Tn}$ be a $T$ dimensional subspace spanned by vectors for which for every time $t$, all coordinates at time $t$ are identical. Formally, for $x \in V$, for every $t$ we require that $x_{(t-1)n+i} = x_{(t-1)n+j}$ for all $i,j \leq n$.

Observe that the restriction of $O_uS$ to $V$, the operator $O_u S P_V$, acts equivalently to the $n=1$  case operator defined by $O_{\tilde{u}}S'$. In particular, similarly to the argument in Lemma \ref{lem:gap_bound}, it follows that 
\begin{equation}
    \norms{\tilde{u}_{min}} \lambda_i(S') \leq \lambda_i(O_{\tilde{u}}S'). 
\end{equation}
Moreover, note that 
$\Brack{O_u S P_V}^*O_u S P_V = 
P_V S^* O_u^* O_u S P_V$ and 
$\Brack{O_u S}^*O_u S$ are non-negative operators and 
$\Brack{O_u S}^*O_u S \geq \Brack{O_u S P_V}^*O_u S P_V$ (that is, 
$\Brack{O_u S}^*O_u S - \Brack{O_u S P_V}^*O_u S P_V$ is non-negative). 
It follows that for all $i\leq T$, 
\begin{equation}
\lambda_i(O_uS) \geq \lambda_i(O_uSP_V).  
\end{equation}
The upper bounds on the spectrum 
$\lambda_i(O_uS)$ may again be obtained via (\ref{eq:singular_prod_norm_compare}). Indeed, 
\begin{equation}
\lambda_i(O_uS) \leq \norms{u_{max}} \lambda_i(S) \leq \norms{u_{max}} \lambda_{\ceil*{\frac{i}{n}}}(S'),
\end{equation}
where the first inequality follows from (\ref{eq:singular_prod_norm_compare}) while the second is due to the multiplicity $n$ of each singular value of $S'$ in $S$.  The rest of the argument proceeds as in Lemma \ref{lem:gap_bound}. 
\end{proof}

\section{Missing Values in STVE}
\label{sec:missing_values}
We now discuss the treatment of missing values in STVE. 
Recall that our starting point is the vector form of the system,  (\ref{eq:main_generative_equation}), which we rewrite here:
\begin{equation}
\label{eq:base_generative_supp}
Y = O_u S h + z,
\end{equation}
where $h \in \RR^{Tn}$ and $z\in \RR^T$ are sub Gaussian vectors and 
$Y = (y_1, \ldots, y_T) \in \RR^T$  is the observation vector. 
Suppose that $M$ out of $T$ observation values are missing, at times $t_1,\ldots,t_M$.
Set $T' = T - M$ and 
define a projection operator $P_A : \RR^{T} \rightarrow \RR^{T}$ as the operator 
that omits the coordinates $t_1,\ldots,t_M$. Formally, let $e(t)$, $t\leq T$, be the standard basis vector in $\RR^T$, with $1$ at coordinate $t$ and zeros elsewhere. 
Then 
\begin{equation}
P_A(e(t)) = 
\begin{cases}
    0 & \text{if } t \in \Set{t_1,\ldots,t_M}, \\
    e(t) & \text{ otherwise.}
\end{cases}
\end{equation}
We can then rewrite (\ref{eq:base_generative_supp}) as 
\begin{equation}
P_A Y = P_A O_u S h + P_A z. 
\end{equation}
Note that the vector $P_A Y$ contains only available, non-missing values of $y_t$. 
Similarly to the case with no missing values, we define $R$ as the Moore-Penrose inverse of $P_A O_u S$. 
Then we have 
\begin{equation}
R P_A Y = R P_A O_u S h + R P_A z.
\end{equation}
Note that since a Moore-Penrose inverse of $R$ only acts on the image of $R$, 
we have $R P_A = R$ and therefore 
\begin{equation}
R Y = R  O_u S h + R z
\end{equation}
similarly to the case with no missing values. The only difference here will be that $R$ will now have $T'$ 
rather than $T$ non-zero singular values. $R'$ will be defined similarly.
The analysis of the spectrum of $R$ concerns only non-zero singular values of $R$ and holds with no 
change other than that $T$ should be replaced by $T'$.  Consequently, 
the whole approach works identically with $T$ replaced by $T'$ everywhere, and in the bounds of Theorems 
\ref{thm:main_thm} and \ref{lem:R_properties} in particular.

\section{Electricity Consumption and Temperatures Data}
\label{sec:electicity_additional_details}

The raw data in \cite{electric_data_paper} contains 
electricity consumption levels for 20 nearby areas, also referred to as  zones. Each zone had slightly different consumption characteristics. All figures in Section \ref{sec:electricity_experiments} refer to Zone 1. However, the results are similar for all zones. 
In particular, while Figure \ref{fig:electric_zone_0_err} shows the (smoothed) prediction errors of each method over time, the total error on the test set, normalized by the number of days, for each method and each zone, is shown in Figure \ref{fig:electric_total_performance}. 
 As with zone 1, adaptive methods perform better for 
all zones.  STVE based estimator is better than MLE in half the cases, 
but the differences between STVE and MLE performance are negligible compared to errors in the 
stationary regression or OG.


In the rest of this section we discuss the preprocessing of the data. 
The raw data contains 
hourly consumption levels for the 20 zones, and hourly 
temperatures from 11 weather stations with unspecified 
locations in the region.  Here we are only interested 
in the total daily consumption -- the total consumption over 
all hours of the day -- for each zone. For each station we 
also consider the average daily temperature for the day. 
Moreover, since the average daily temperatures at different 
stations are strongly correlated ($\rho \geq 0.97$ for all 
pairs of stations), we only consider a single daily number, 
$v_t$ -- an average temperature across all hours of the day 
and all stations. 

From the total of about 234 weeks in the data, 
9 non-consecutive weeks have missing loads 
(consumption) data, 4 in the first half of the 
period (train set), and 5 in the second half. 
For the purposes of training the stationary 
regression, we excluded the 
data points with missing values from the train 
set. The STVE, MLE and OG methods were trained with missing values. 
The prediction results of all methods 
were evaluated only at the points where actual 
values were known. 

The temperatures and all the loads are normalized to have zero-mean and standard deviation 1 on the first half 
of the data (train set).

The MLE estimates of the parameters $\sigma^2, \eta^2$ 
were obtained using the DLM package of the R environment  \citep{rdlm}. The package uses an L-BGFS-B algorithm with 
numerically approximated derivatives as the underlying 
optimization procedure.


\begin{figure}
\centering
\subcaptionbox{
Load over time, with 6 month (orange) and year (green) moving average smoothing.
\label{fig:electric_consumption_trend}}{\includegraphics[width=0.5\textwidth]{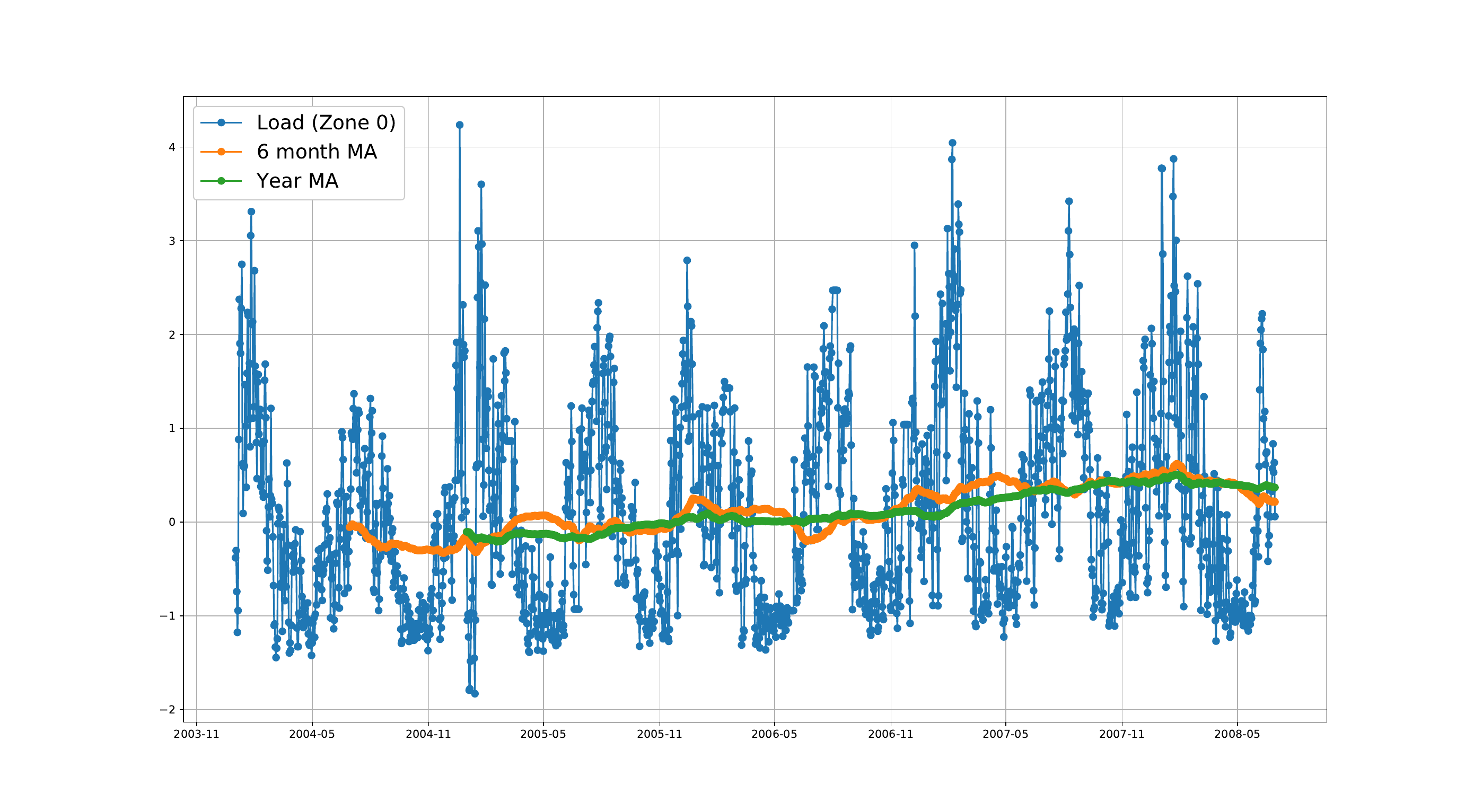}}
\hfill
\subcaptionbox{
Total average prediction error, for each zone.
\label{fig:electric_total_performance}}{
\includegraphics[width=0.45\textwidth]{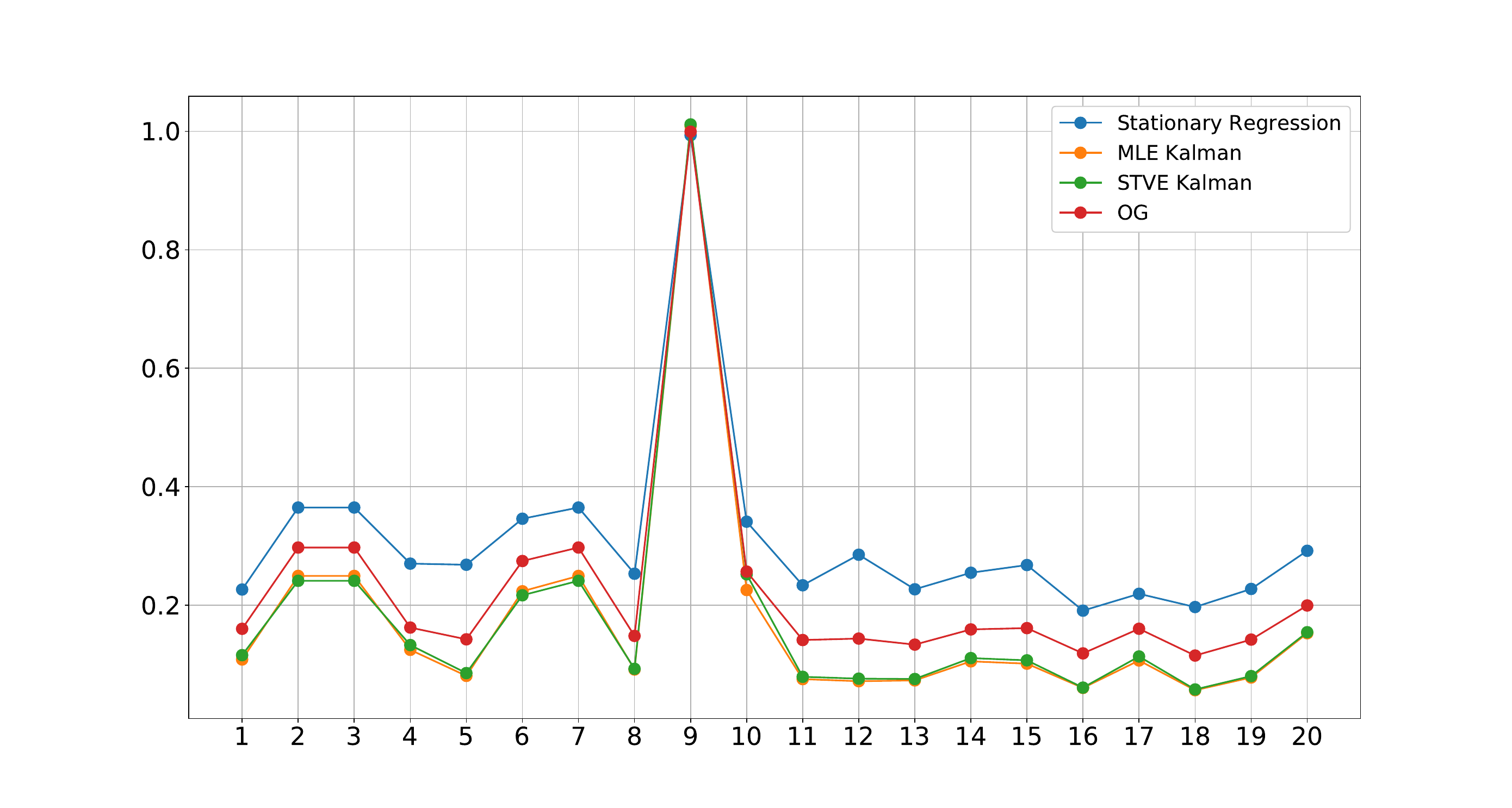}
}%
\caption{Supplementary Experiment Figures}
\end{figure}

One immediate reason for the dependence of the load on temperature to change 
with time (as shown in Figure \ref{fig:electic_staionary_first_second} for Zone 1) is that the load simply grows with time 
(assuming the temperatures do not exhibit a trend of the 
same magnitude). Indeed, it is easy to verify that there is an upward trend in the 
overall load, as shown in Figure \ref{fig:electric_consumption_trend}. However, the upward trend is non-uniform in temperature, since the parabolas in Figure \ref{fig:electic_staionary_first_second} are not a shift of each other by a constant.

Finally, in Figure  \ref{fig:electric_total_performance} average prediction errors  are shown for each method and for each zone in the dataset. Zone 9 is known to be an industrial zone,
\citep{electric_data_blog}, where the load does not significantly depend on the temperature, 
hence the higher error rates for all the methods. For the other zones the situation is similar to Zone 1, with STVE and MLE performing comparably, and better than either OG or the Stationary regression. 
\end{document}